\title{Empirical Policy Evaluation with Supergraphs}
\DeclareMathOperator*{\argmax}{arg\,max}
\DeclarePairedDelimiter{\ceil}{\lceil}{\rceil}
\newcommand{\E}{\mathbb{E}}
\renewcommand{\P}{\mathbb{P}}
\newcommand{\R}{\mathbb{R}}
\newcommand{\N}{\mathbb{N}}
\newcommand{\Z}{\mathbb{Z}}
\newcommand{\tr}{\mathsf{T}}
\renewcommand{\epsilon}{\varepsilon}
\begin{document}

\maketitle

\begin{abstract}%
We devise and analyze algorithms for the empirical policy evaluation problem in reinforcement learning. Our algorithms explore backward from high-cost states to find high-value ones, in contrast to forward approaches that work forward from all states. While several papers have demonstrated the utility of backward exploration empirically, we conduct rigorous analyses which show that our algorithms can reduce average-case sample complexity from $O(S \log S)$ to as low as $O(\log S)$.
\end{abstract}

\begin{keywords}%
Reinforcement learning, backward/reverse empirical policy evaluation
\end{keywords}

\section{Introduction} \label{secIntro}

Reinforcement learning (RL) is a machine learning paradigm with potential for impact in wide-ranging applications. At a high level, RL studies autonomous agents interacting with uncertain environments -- by taking actions, observing the effects of those actions, and incurring costs -- in hopes of achieving some goal. Mathematically, this is often cast in the following (finite, discrete-time) Markov decision process (MDP) model. Let $\mathcal{S}$ and $\mathcal{A}$ be finite sets of states and actions, respectively; for simplicity, we assume $\mathcal{S} = \{1,\ldots,S\}$ for some $S \in \N$ throughout the paper. The uncertain environment is modeled by a controlled Markov chain with transition matrix $Q$, i.e.\
\begin{equation}
\P ( S_{t+1} = s' | S_t = s , A_t = a ) = Q( s' | s,a )\ \forall\ s,s' \in \mathcal{S}, a \in \mathcal{A} ,
\end{equation}
where $\{ S_t \}_{t=0}^{\infty}$ and $\{ A_t \}_{t=0}^{\infty}$ are the random sequences of states and actions, respectively. State-action pair $(s,a) \in \mathcal{S} \times \mathcal{A}$ incurs instantaneous cost $c(s,a) \in \R_+$. Mappings $\pi : \mathcal{S} \rightarrow \mathcal{A}$ are called (stationary, deterministic, Markov) \textit{policies} and dictate the action taken at each state, i.e.\ $A_t = \pi(S_t)$. If the initial state is $s \in \mathcal{S}$ and the agent follows policy $\pi$, it incurs discounted cost
\begin{equation}\label{eqPolicyEval}
v_{\pi}(s) = \E_{\pi} \left[ (1-\alpha)  \sum_{t=0}^{\infty} \alpha^t c ( S_t, A_t ) \middle| S_0 = s \right] = (1-\alpha) \sum_{t=0}^{\infty} \alpha^t Q_{\pi}^t(s,\cdot) c_{\pi} ,
\end{equation}
where $\E_{\pi}$ means $A_t = \pi(S_t)$ inside the expectation, $\alpha \in (0,1)$ is a \textit{discount factor}, and $Q_{\pi}(s,s') = Q(s'|s,\pi(s))$ and $c_{\pi}(s) = c(s,\pi(s))$ are the transition matrix and cost vector induced by $\pi$. 

To find good policies -- roughly, $\pi$ for which $v_{\pi} = \{ v_{\pi}(s) \}_{s \in \mathcal{S} }$ is small -- one often needs to estimate $v_{\pi}$ for a fixed policy $\pi$. For example, the \textit{empirical policy iteration} algorithm of \cite{haskell2016empirical} iteratively estimates $v_{\pi}$ and greedily updates $\pi$. Moving forward, we focus on the former step, called \textit{empirical policy evaluation} (EPE). The policy $\pi$ will thus be fixed for the remainder of the paper, so we dispense with this subscript in \eqref{eqPolicyEval} and (with slight abuse of notation) define our problem as follows. Let $\alpha \in (0,1)$ be a discount factor, $c \in \R_+^S$ a cost vector, and $Q$ an $S \times S$ row stochastic matrix. We seek an algorithm to estimate the \textit{value function}
\begin{equation}\label{eqValFunc}
v = (1-\alpha) \sum_{t=0}^{\infty} \alpha^t Q^t c = (1-\alpha) ( I - \alpha Q )^{-1} c .
\end{equation}
As is typical in the RL literature, we assume $Q$ is unknown but the agent can sample random states distributed as $Q(s,\cdot)$ via interaction with the environment. Since this interaction can be costly in applications, we aim to estimate \eqref{eqValFunc} with as few samples as possible. In contrast to some works, we also assume $c$ is a known input to the algorithm. Thus, our work is suitable for goal-oriented applications where one knows instantaneous costs \textit{a priori} -- for instance, which states correspond to winning or losing if the MDP models a game -- and aims to estimate long-term discounted costs -- for instance, how good or bad non-terminal configurations of the game are.

To contexualize our contributions, we contrast two approaches to EPE. The first approach is one of forward exploration, where $v$ is estimated by sampling trajectories beginning at each state. We focus on a typical approach employed in e.g.\ \cite{haskell2016empirical}, which we refer to as the \textit{forward approach} for the remainder of the paper, and which proceeds as follows. First, let $\{ W_t \}_{t=0}^{\infty}$ be a Markov chain with transition matrix $Q$, fix $s \in \mathcal{S}$ and $T \in \N$, and rewrite \eqref{eqValFunc} as
\begin{equation}\label{eqValFuncError}
v(s) = (1-\alpha) \sum_{t=0}^{T-1} \alpha^t \E [ c(W_t) | W_0 = s ] + O \left( \| c \|_{\infty} \alpha^T \right) .
\end{equation}
Here the $O(\| c \|_{\infty} \alpha^T)$ bias can be made small if $T$ is chosen large, and the first term can be estimated by simulating length-$T$ trajectories. More specifically, let $\{ W_t^{s,i} \}_{t=0}^{T-1}$ be a trajectory obtained as follows: set $W_0^{s,i} = s$ and, for $t \in \{1,\ldots,T-1\}$, sample $W_t^{s,i}$ from $Q(W_{t-1}^{s,i},\cdot)$. Letting $m \in \N$ and repeating this for $i \in \{1,\ldots,m\}$, we obtain an unbiased estimate of the first term in \eqref{eqValFuncError}:
\begin{equation}
\frac{1}{m} \sum_{i=1}^m (1-\alpha) \sum_{t=0}^{T-1} \alpha^t c ( W_t^{s,i} ) .
\end{equation}
This forward approach is analytically quite tractable; indeed, rigorous guarantees follow easily from standard Chernoff bounds (see Appendix \ref{appStandardApproach}). However, since trajectories must be sampled starting at each state, $\Omega(S)$ samples are fundamentally required, which may be prohibitive in practice.

The second approach we consider is one of backward exploration. This approach relies on the idea that if there are only a few high-cost states with only a few trajectories leading to them, it is more efficient to work backward along just these trajectories (or along a small set containing them) to identify high-value states (those $s$ for which $v(s)$ is large). Put differently, if $Q$ and $c$ are sparse, intuition suggests that backward exploration from high-cost states is more sample-efficient than forward exploration from all states. While intuitively reasonable, there are two issues that  prevent backward exploration from reducing the linear sample complexity of the forward approach. First, the agent must identify high-cost states in order to explore backward from them, without visiting all states. Second, the agent must explore a small set of trajectories likely to lead to high-cost states, without starting at each state and filtering out trajectories that do not reach the high-cost set. Several approaches have been proposed to combat these issues. For instance, \cite{goyal2018recall} uses observed state-action-cost sequences to train a model that generates samples of state-action pairs likely to lead to a given state. This allows the agent to construct simulated trajectories that are guaranteed to lead to high-cost states, addressing the second issue; the observed sequences are also used to identify high-cost states, addressing the first issue. \cite{edwards2018forward} similarly trains a model that predicts which trajectories lead to high-cost states while assuming costs are known \textit{a priori}. In a different vein, \cite{florensa2017reverse} considers physical tasks like a robot navigating a maze which have clear goal states, addressing the first issue. The state-action space is assumed to have a certain continuity -- namely, ``small'' actions (e.g.\ a robot moving a small distance) lead to ``nearby'' states (e.g.\ physically close locations) -- addressing the second issue. 

Our approach is as follows. First, as mentioned above, we assume the cost vector is known \textit{a priori} (like \cite{edwards2018forward} and similar to \cite{florensa2017reverse}). Second, we assume the agent is provided certain side information: $A \in \{0,1\}^{S \times S}$ satisfying the ``absolute continuity'' condition
\begin{equation}\label{eqAbsContCond}
A(s,s') = 0  \Rightarrow  Q (s,s') = 0\ \forall\ s,s' \in \mathcal{S} .
\end{equation}
Note we can view $A$ as the adjacency matrix for a graph whose edges are a superset of those in the graph induced by $Q$; thus, we refer to this side information as the \textit{supergraph}. The utility of the supergraph is that it allows the agent to determine which states may be ``close'' to high-cost states in the induced graph, which may allow for construction of trajectories leading to such states. In this work, we assume the supergraph is provided and do not address the important practical consideration of how to actually obtain it. However, we do note it can likely be obtained from domain knowledge. For instance, in a robot navigation task like \cite{florensa2017reverse}, one-step transitions between physically distant states $s$ and $s'$ may be impossible, which would allow us to conclude $Q(s,s') = 0$ \textit{a priori} and set $A(s,s') = 0$. Unlike \cite{florensa2017reverse}, however, our supergraph assumption does not depend on state-action continuity and thus should hold more generally; for example, if the MDP models a game, the game's rules may prevent transitions from $s$ to $s'$, so that $A(s,s') = 0$. We also emphasize that the reverse of the implication in \eqref{eqAbsContCond} need not hold. Thus, one can always set $A(s,s') = 1\ \forall\ s,s'$ to ensure that \eqref{eqAbsContCond} holds. Of course, there is a trade off; as will be seen, our algorithms are most efficient when $A$ is sparse in a certain sense.

In the remainder of the paper, we devise two backward exploration-based EPE algorithms that exploit the supergraph. Unlike \cite{goyal2018recall,edwards2018forward,florensa2017reverse}, which only present empirical results, our algorithms are amenable to rigorous accuracy and sample complexity guarantees. Thus, our main contribution is to offer theoretical evidence for the empirical success of backward exploration. More precisely, our contributions are as follows. First, we devise an algorithm called \texttt{Backward-EPE} in Section \ref{secBackward} that uses the supergraph to discover high-value states while working backward from high-cost ones. We establish $l_{\infty}$ accuracy and worst-case sample complexity $O(S \log S)$, equivalent to the average-case complexity of the forward approach. More notably, we show the average-case complexity of \texttt{Backward-EPE} is $O( \bar{d} \| c \|_1 / \| c \|_{\infty} \log S)$, where $\bar{d}$ is the average degree in the supergraph. Note this bound precisely captures the intuition that backward exploration depends on how many high-cost states are present ($\| c \|_1 / \| c \|_{\infty}$ term) and how many trajectories lead to them ($\bar{d}$ term). In the extreme case, $\bar{d} \| c \|_1 / \| c \|_{\infty} = O(1)$, in which case \texttt{Backward-EPE} reduces complexity from $S \log S$ to $\log S$. Next, we combine \texttt{Backward-EPE} with the forward approach for our second algorithm \texttt{Bidirectional-EPE} in  Section \ref{secBidirectional}. We establish a (pseudo)-relative error guarantee, which we argue is useful in e.g.\ empirical policy iteration. Analytically, we show \texttt{Bidirectional-EPE} reduces the sample complexity of a plug-in method with the same accuracy guarantee; empirically, we show it is more efficient than using the backward or forward approach alone. Both of our algorithms are inspired by methods that estimate \textit{PageRank}, a node centrality measure used in the network science literature \cite{page1999pagerank}. While seemingly unrelated to EPE, PageRank has mathematical form similar to that of the value function \eqref{eqValFunc}; however, the PageRank estimation literature assumes $Q$ is known, so the extension to EPE is non-trivial. Thus, another contribution of this work to show how PageRank estimators can be adapted to EPE. We believe our algorithms and analysis are only examples of a more general approach; Section \ref{secFuture} discusses other problems where we believe our approach will be useful.

\textbf{Commonly-used notation:} For a matrix $B$ and any $t \in \N$, $B^t(s,s')$, $B^t(s,\cdot)$, and $B^t(\cdot,s')$ denote the $(s,s')$-th entry, $s$-th row, and $s'$-th column of $B^t$, respectively. We write $0_{n \times m}$ and $1_{n \times m}$ for the $n \times m$ matrices of zeroes and ones, resp. Matrix transpose is denoted by $^\tr$. We use $1(\cdot)$ for the indicator function, i.e.\ $1(E) = 1$ if statement $E$ is true and $1(E) = 0$ otherwise. For $s \in \mathcal{S}$, $e_s$ is the $S$-length vector with $1$ in the $s$-th entry and $0$ elsewhere, i.e.\ $e_s(s') = 1(s=s')$. Also for $s \in \mathcal{S}$, $N_{in}(s) = \{ s' \in \mathcal{S} : A(s',s) = 1 \}$ and $d_{in}(s) = |N_{in}(s)|$ are the incoming neighbors and in-degree of $s$ in the supergraph. Average degree is denoted by $\bar{d} = \frac{1}{S} \sum_{s,s'=1}^S A(s,s') = \frac{1}{S} \sum_{s=1}^S d_{in}(s)$. For $\{ a_n \}_{n \in \N} , \{ b_n \}_{n \in \N} \subset [0,\infty)$, we write $a_n = O(b_n)$, $a_n = \Omega(b_n)$, $a_n = \Theta(b_n)$, and $a_n = o(b_n)$, resp., if $\limsup_{n \rightarrow \infty} \frac{a_n}{b_n} < \infty$, $\liminf_{n \rightarrow \infty} \frac{a_n}{b_n}  > 0$, $a_n = O(b_n)$ and $a_n = \Omega(b_n)$, and $\lim_{n \rightarrow \infty} \frac{a_n}{b_n} = 0$, resp. All random variables are defined on a common probability space $(\Omega,\mathcal{F},\P)$, with $\E [ \cdot ]  = \int_{\Omega} \cdot\ d \P$ denoting expectation and $a.s.$ meaning $\P$-almost surely.

\section{Backward empirical policy evaluation} \label{secBackward}


Our first algorithm is called \texttt{Backward-EPE} and is based on the \texttt{Approx-Contributions} PageRank estimator from \cite{andersen2008local}. The latter algorithm restricts to the case $c = e_{s^*}$ for some $s^* \in \mathcal{S}$ and assumes $Q$ is known; our algorithm is a fairly natural generalization to the case $c \in \R_+^S$ and unknown $Q$. For brevity, we restrict attention to \texttt{Backward-EPE} in this section. For transparency, Appendix \ref{appComparison} discusses \texttt{Approx-Contributions} and clarifies which aspects of our analysis are borrowed from \cite{andersen2008local} and other existing work.

\texttt{Backward-EPE} is defined in Algorithm \ref{algBackwardEPE}. The algorithm takes as input cost vector $c$, discount factor $\alpha$, and desired accuracy $\epsilon$, and initializes four variables: a value function estimate $\hat{v}_0 = 0_{S \times 1}$, a residual error vector $r_0 = c$, a set $U_0 = \emptyset$ we call the \textit{encountered set}, and a transition matrix estimate $\hat{Q}_0 = 0_{S \times S}$. Conceptually, the algorithm then works backward from high-cost states, iteratively pushing mass from residual vector to estimate vector so as to improve the estimate of $v$. More precisely, the first iteration proceeds as follows. First, a high-cost state $s_1$ is chosen ($s_1 \in \mathcal{S}$ such that $r_0(s_1) = c(s_1)$ is maximal) and its incoming supergraph neighbors $N_{in}(s_1)$ are added to the encountered set (first line in \texttt{while} loop). For $s \in U_1 = N_{in}(s_1)$ -- i.e.\ $s$ for which $Q(s,s_1)$ may be nonzero by \eqref{eqAbsContCond} -- an estimate $\hat{Q}_1(s,\cdot)$ of $Q(s,\cdot)$ is computed using $n$ samples (first \texttt{for} loop). The estimate $\hat{v}_1(s_1)$ is then incremented with the $(1-\alpha) r_0(s) = (1-\alpha) c(s)$ component of $v(s_1)$, and $\hat{Q}_1(s,s_1)$ is used to estimate the $Q(s,s_1) r_0(s_1) = Q(s,s_1) c(s_1)$ component of $v(s)$ and to increment the corresponding residual $r_1(s)$ (second \texttt{for} loop). 

In subsequent iterations $k$, the iterative update proceeds analogously, choosing $s_k$ to maximize $r_{k-1}(s_k)$, adding $N_{in}(s_k)$ to the encountered set, incrementing $\hat{v}_k(s_k)$ by $(1-\alpha) r_{k-1}(s)$, and using an estimate of $Q(s,s_k) r_{k-1}(s_k)$ to increment $r_k(s)$. The only distinction is that at iteration $k$, $Q(s,\cdot)$ is estimated only for states $s \in U_k \setminus U_{k-1}$. Put differently, the first time we encounter state $s$ -- i.e.\ the first $k$ for which $s \in N_{in}(s_k)$ -- we estimate $Q(s,\cdot)$; we then retain that estimate for the remainder of the algorithm. Thus, the encountered set $U_k$ tracks the rows of $Q$ we have estimated up to and including iteration $k$. Alternatively, one could estimate $Q(s,s_k)$ with independent samples at each iteration $k$ for which $s \in N_{in}(s_k)$; we discuss the merits of this approach in Section \ref{secFuture}.

\begin{algorithm}
\caption{ \texttt{Backward-EPE} } \label{algBackwardEPE}
\KwIn{Sampler for transition matrix $Q$; cost vector $c$; discount factor $\alpha$; supergraph in-neighbors $\{N_{in}(s)\}_{s=1}^S$; termination parameter $\epsilon$; per-state sample count $n$}

$k = 0$, $\hat{v}_k = 0_{S \times 1}$, $r_k = c$, $U_k = \emptyset$, $\hat{Q}_k = 0_{S \times S}$

\While{$\|r_k\|_{\infty} > \epsilon$}{

	$k \leftarrow k+1$, $s_k \sim \argmax_{s \in \mathcal{S}} r_{k-1}(s)$ uniformly, $U_k = U_{k-1} \cup N_{in}(s_k)$ 

	\For{$s \in \mathcal{S}$}{ 
		\lIf{$s \in N_{in}(s_k) \setminus U_{k-1}$}{$\{ X_{s,i} \}_{i=1}^n \sim Q(s,\cdot)$, $\hat{Q}_k(s,\cdot) = \frac{1}{n} \sum_{i=1}^n 1 ( X_{s,i} = \cdot)$}
		\lElse{$\hat{Q}_k(s,\cdot) = \hat{Q}_{k-1}(s,\cdot)$} 
	}

	\For{$s \in \mathcal{S}$}{
		\lIf{$s = s_k$}{$\hat{v}_k(s) = \hat{v}_{k-1}(s) + (1-\alpha) r_{k-1}(s)$, $r_k(s) = \alpha \hat{Q}_k(s,s_k) r_{k-1}(s_k)$} 
		\lElse{$\hat{v}_k(s) = \hat{v}_{k-1}(s)$, $r_k(s) = r_{k-1}(s) + \alpha \hat{Q}_k(s,s_k) r_{k-1}(s_k)$} 
	}
}
\KwOut{Estimate $\hat{v}_{k_*} = \hat{v}_k$ of $v = (1-\alpha) \sum_{t=0}^{\infty} \alpha^t Q^t c$}
\end{algorithm}

The manner in which we estimate $Q$ and update the estimate and residual vectors may appear mysterious, but it allows us to prove the following  analogue of a key result from \cite{andersen2008local}. To explain this result, first let $k_* = \inf \{ k \in \Z_+ : \|r_k\|_{\infty} \leq \epsilon \}$ denote the iteration at which \texttt{Backward-EPE} terminates, and let $\hat{\mu}_s$ denote the $s$-th row of $(1-\alpha) (I-\alpha \hat{Q}_{k_*})^{-1}$, so that $\hat{\mu}_s c$ is the value function for $s \in \mathcal{S}$ defined on the final estimate $\hat{Q}_{k_*}$ of $Q$. Then the result (roughly) says that the fixed point equation $\hat{v}_k(s) + \hat{\mu}_s r_k = \hat{\mu}_s c$ is preserved across iterations $k \in \{0,\ldots,k_*\}$. Conceptually, this means that if we run the algorithm until it terminates to obtain $\hat{Q}_{k_*}$, then look back at the sequence $\{ \hat{v}_k , r_k \}_{k=0}^{k_*}$ generated by the algorithm, the fixed point equation will have held at each $k$. This non-causality is somewhat unintuitive, yet is crucial to the ensuring analysis.

More precisely, Lemma \ref{lemInvariantUnknownQ} says that such a fixed point equation holds for certain row stochastic matrices which differ from $\hat{Q}_{k_*}$ only in unestimated rows of $Q$, i.e.\ rows indexed by $\mathcal{S} \setminus U_{k_*}$. The set of such matrices for which the result holds is discussed in Appendix \ref{proofLemInvariantUnknownQ}; for brevity, here we state the result only for the two elements of this set we require in later analyses: $\overline{Q}$, which fills unestimated rows with offline estimates, and $\underline{Q}$, which fills unestimated rows with actual rows of $Q$. 

\begin{lemma} \label{lemInvariantUnknownQ}
Let $Y_{s,i} \sim Q(s,\cdot)\ \forall\ s \in \mathcal{S}, i \in [n]$, independent across $s$ and $i$, and independent of the random variables in Algorithm \ref{algBackwardEPE}. From $\{ Y_{s,i} \}_{s \in \mathcal{S}, i \in [n]}$, define an offline estimate $\tilde{Q}$ of $Q$ row-wise by $\tilde{Q}(s,\cdot) = \frac{1}{n} \sum_{i=1}^n 1 ( Y_{s,i} = \cdot )$. Furthermore, define
\begin{gather}
\overline{Q}(s,\cdot) = \begin{cases} \hat{Q}_{k_*}(s,\cdot) , & s \in U_{k_*} \\ \tilde{Q}(s,\cdot) , & s \in \mathcal{S} \setminus U_{k_*} \end{cases} , \quad \overline{\mu_s} = (1-\alpha) e_s^{\tr} ( I - \alpha \overline{Q})^{-1} , \quad \overline{v}(s) = \overline{\mu_s} c , \label{eqOverDefn} \\ 
\underline{Q}(s,\cdot) = \begin{cases} \hat{Q}_{k_*}(s,\cdot) , & s \in U_{k_*} \\ Q(s,\cdot)  , & s \in \mathcal{S} \setminus U_{k_*} \end{cases} , \quad \underline{\mu_s} = (1-\alpha) e_s^{\tr} ( I - \alpha \underline{Q})^{-1}  , \quad \underline{v}(s) = \underline{\mu_s} c , \label{eqUnderDefn} 
\end{gather}
where $k_* = \inf \{ k \in \Z_+ : \|r_k\|_{\infty} \leq \epsilon \}$ is the iteration at which Algorithm \ref{algBackwardEPE} terminates. Then
\begin{equation} \label{eqInvariantOverUnder}
\hat{v}_k(s) + \overline{\mu_s} r_k = \overline{v}(s), \quad \hat{v}_k(s) + \underline{\mu_s} r_k = \underline{v}(s) \quad \forall\ k \in \{0,\ldots,k_*\}, s \in \mathcal{S}\ a.s.
\end{equation}
\end{lemma}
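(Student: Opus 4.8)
The plan is to prove both fixed point equations in \eqref{eqInvariantOverUnder} simultaneously by induction on $k$. Write $M$ for either $\overline{Q}$ or $\underline{Q}$ and set $\mu_s^M = (1-\alpha) e_s^{\tr}(I-\alpha M)^{-1}$, $v^M(s) = \mu_s^M c$, so that $(M,\mu_s^M,v^M)$ is $(\overline{Q},\overline{\mu_s},\overline{v})$ or $(\underline{Q},\underline{\mu_s},\underline{v})$; since $M$ is row stochastic and $\alpha \in (0,1)$, $I - \alpha M$ is invertible and these are well defined. The statement to establish is $\hat{v}_k(s) + \mu_s^M r_k = v^M(s)$ for all $s \in \mathcal{S}$ and $k \in \{0,\dots,k_*\}$. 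The base case $k = 0$ is immediate from $\hat{v}_0 = 0_{S \times 1}$, $r_0 = c$. For the inductive step, I would first rewrite one pass of the \texttt{while} loop compactly: with $\delta = r_{k-1}(s_k)$ and $\hat{Q}_k(\cdot,s_k)$ the $s_k$-th column of $\hat{Q}_k$, the updates in Algorithm \ref{algBackwardEPE} read $\hat{v}_k = \hat{v}_{k-1} + (1-\alpha)\delta\, e_{s_k}$ and $r_k = r_{k-1} - \delta\, e_{s_k} + \alpha \delta\, \hat{Q}_k(\cdot,s_k)$. Plugging these into $\hat{v}_k(s) + \mu_s^M r_k$ and applying the inductive hypothesis reduces the claim at step $k$ to showing $\delta\left((1-\alpha)1(s = s_k) - \mu_s^M(s_k) + \alpha\, \mu_s^M \hat{Q}_k(\cdot,s_k)\right) = 0$ for every $s$.

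The crux is to identify the column $\hat{Q}_k(\cdot,s_k)$ with $M(\cdot,s_k)$, almost surely. Two facts are needed. First, once a row of the transition estimate is written it is never modified, so $\hat{Q}_k(s,\cdot) = \hat{Q}_{k_*}(s,\cdot)$ for $s \in U_k$ and $\hat{Q}_k(s,\cdot) = 0$ for $s \notin U_k$; as $U_k \subseteq U_{k_*}$, the definitions \eqref{eqOverDefn}--\eqref{eqUnderDefn} give $M(s,s_k) = \hat{Q}_{k_*}(s,s_k) = \hat{Q}_k(s,s_k)$ for all $s \in U_k$. Second, and this is the main point, for $s \notin U_k$ we also have $s \notin N_{in}(s_k) \subseteq U_k$, so $A(s,s_k) = 0$, so $Q(s,s_k) = 0$ by the absolute continuity condition \eqref{eqAbsContCond}; since $\hat{Q}_{k_*}(s,\cdot)$ and $\tilde{Q}(s,\cdot)$ are empirical measures of i.i.d.\ draws from $Q(s,\cdot)$, it follows that $\hat{Q}_{k_*}(s,s_k) = \tilde{Q}(s,s_k) = 0$ almost surely -- there being finitely many state pairs, we intersect only finitely many probability-one events -- and of course $Q(s,s_k) = 0$ as well, so $M(s,s_k) = 0 = \hat{Q}_k(s,s_k)$. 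Hence $\hat{Q}_k(\cdot,s_k) = M(\cdot,s_k)$ a.s.

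It then remains to close the induction, and here I would invoke the defining relation $\mu_s^M (I - \alpha M) = (1-\alpha) e_s^{\tr}$: its $s_k$-th coordinate reads $\mu_s^M(s_k) = (1-\alpha)1(s = s_k) + \alpha\, \mu_s^M M(\cdot,s_k)$, which, combined with $\hat{Q}_k(\cdot,s_k) = M(\cdot,s_k)$, collapses the bracket from the first paragraph to $0$. I expect the one genuinely non-mechanical step to be the column-matching argument of the second paragraph: $\hat{Q}_k$ and $M$ can differ only in the rows $\mathcal{S}\setminus U_k$, and one has to recognize that these rows contribute nothing to column $s_k$ precisely because $N_{in}(s_k)\subseteq U_k$ forces $A(s,s_k)=0$ -- hence $Q(s,s_k)=0$ and the vanishing of all its empirical surrogates -- for every $s\notin U_k$. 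This is exactly what makes the non-causal phrasing legitimate: the terminal estimate $\hat{Q}_{k_*}$ (through $M$) enters an identity asserted at every earlier iteration $k$. The remaining manipulations are bookkeeping; I would also note that nothing in the argument uses the $\argmax$ selection of $s_k$, so the invariant holds for any rule for picking the push state.
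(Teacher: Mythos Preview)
Your proposal is correct and follows essentially the same route as the paper: induction on $k$, with the inductive step reducing to the column identity $\hat{Q}_k(\cdot,s_k)=M(\cdot,s_k)$, which is established by splitting on $s\in U_k$ versus $s\notin U_k$ and using $N_{in}(s_k)\subseteq U_k$ together with the supergraph condition \eqref{eqAbsContCond}. The only cosmetic difference is that the paper packages the argument as a single lemma for any row-stochastic $P$ satisfying $P(s,\cdot)=\hat{Q}_{k_*}(s,\cdot)$ on $U_{k_*}$ and $A(s,s')=0\Rightarrow P(s,s')=0$, whereas you verify the latter implication directly for $\overline{Q}$ and $\underline{Q}$ via the ``empirical measure of a null event is null'' observation; the content is the same.
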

\begin{proof}
Appendix \ref{proofLemInvariantUnknownQ}.
\end{proof}

Owing to the fact that \eqref{eqInvariantOverUnder} holds across iterations, we will refer to the identities in \eqref{eqInvariantOverUnder} as the \textit{$\overline{Q}$-invariant} and the \textit{$\underline{Q}$-invariant}, respectively. These invariants will be pivotal in the theorems to come; interestingly, though, only one invariant is useful for each theorem, while the other fails. This is due to technical issues discussed in Remarks \ref{remOverOrUnder_acc1}, \ref{remOverOrUnder_enc}, and \ref{remOverOrUnder_acc2} in the appendix. We also emphasize the offline estimate $\tilde{Q}$ is an analytical tool and does not affect our algorithm's sample complexity.

We turn to the first of the aforementioned theorems, an accuracy guarantee for \texttt{Backward- EPE}. Toward this end, note that $\overline{\mu_s}$ is a distribution over $\mathcal{S}$ and recall that $\|r_{k_*}\|_{\infty} \leq \epsilon$ by definition; thus, the $\overline{Q}$-invariant ensures that the ultimate estimate $\hat{v}_{k_*}(s)$ of $v(s)$ satisfies
\begin{equation}
| \hat{v}_{k_*}(s) - v(s) | \leq | \hat{v}_{k_*}(s) - \overline{v}(s) | + | \overline{v}(s) - v(s) | = \overline{\mu_s} r_{k_*} + | \overline{v}(s) - v(s) | \leq \epsilon + | \overline{v}(s) - v(s) | .
\end{equation}
For the remaining summand $| \overline{v}(s) - v(s) |$, recall $v$ and $\overline{v}$ are the value functions defined on $Q$ and an estimate of $Q$, respectively. Thus, if the estimate of $Q$ is sufficiently acccurate, this remaining summand will be small. This is made precise by the following theorem. We note that showing $\overline{v} \approx v$ with high probability is not immediate, because $\overline{v}$ is a biased estimate of $v$ in general; instead, the proof bounds $\| \overline{v}- v \|_{\infty}$ by a random variable more conducive to standard Chernoff bounds. We also note this $l_{\infty}$ guarantee matches the forward approach's guarantee from \cite{haskell2016empirical}.
\begin{theorem} \label{thmAccuracy}
Fix $\epsilon, \delta > 0$ and define
\begin{equation}
n^*(\epsilon,\delta) =  \frac{ 2 \| c \|_{\infty}^2 \alpha^2 }{ \epsilon^2(1-\alpha)^2}  \log \left( \frac{2 S}{\delta} \ceil*{ \frac{ \log ( 4 \| c \|_{\infty} / \epsilon ) }{ 1-\alpha } } \right) .
\end{equation}
Then assuming $n \geq n^*(\epsilon,\delta)$ in Algorithm \ref{algBackwardEPE}, $\P ( \| \hat{v}_{k_*} - v \|_{\infty} \geq 2 \epsilon ) \leq \delta$.
\end{theorem}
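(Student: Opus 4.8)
The plan is to pair the $\overline{Q}$-invariant of Lemma~\ref{lemInvariantUnknownQ} with a concentration argument bounding how far $\overline{v}$, the value function of the (random) estimated chain $\overline{Q}$, can be from the true value function $v$. First I would use the invariant to reduce the theorem to a statement about $\overline{v}$. Setting $k = k_*$ in $\hat{v}_k(s) + \overline{\mu_s} r_k = \overline{v}(s)$ gives $\hat{v}_{k_*}(s) - \overline{v}(s) = -\overline{\mu_s} r_{k_*}$ for all $s$. Since $\overline{Q}$ is row stochastic and $\alpha < 1$, $\overline{\mu_s} = (1-\alpha) \sum_{t \geq 0} \alpha^t e_s^{\tr} \overline{Q}^t$ is a probability distribution on $\mathcal{S}$, hence $|\hat{v}_{k_*}(s) - \overline{v}(s)| \leq \|\overline{\mu_s}\|_1 \|r_{k_*}\|_{\infty} = \|r_{k_*}\|_{\infty} \leq \epsilon$ deterministically, the final bound being the termination rule defining $k_*$. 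By the triangle inequality it therefore suffices to prove $\P(\|\overline{v} - v\|_{\infty} \geq \epsilon) \leq \delta$.

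Next, as the remarks preceding the theorem anticipate, one cannot simply Chernoff-bound $\overline{v}$, since it is a biased estimate of $v$ (the map $Q \mapsto (I - \alpha Q)^{-1}$ is nonlinear); instead I would dominate $\|\overline{v} - v\|_{\infty}$ by a random variable amenable to Hoeffding's inequality. Truncate both Neumann series after $T = \ceil*{\log(4\|c\|_{\infty}/\epsilon)/(1-\alpha)}$ terms; because $\alpha^T \leq e^{-(1-\alpha)T} \leq \epsilon/(4\|c\|_{\infty})$ and $0 \leq Q^t c, \overline{Q}^t c \leq \|c\|_{\infty} 1_{S \times 1}$, this truncation changes each side by at most $\epsilon/4$, so it is enough to bound $(1-\alpha) \sum_{t=0}^{T-1} \alpha^t e_s^{\tr} (\overline{Q}^t - Q^t) c$. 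Telescoping $\overline{Q}^t - Q^t = \sum_{j=0}^{t-1} \overline{Q}^j (\overline{Q} - Q) Q^{t-1-j}$ and using that each row of $\overline{Q}^j$ sums to one -- this is precisely why the randomness in $\overline{Q}$ does not spoil the bound -- each summand is at most $\|(\overline{Q} - Q)(Q^{t-1-j}c)\|_{\infty} \leq Z$, where $Z := \max_{0 \leq l < T} \|(\overline{Q} - Q)(Q^l c)\|_{\infty}$. Since $(1-\alpha)\sum_{t \geq 0} t\alpha^t = \alpha/(1-\alpha)$, we obtain $\|\overline{v} - v\|_{\infty} \leq \frac{\alpha}{1-\alpha} Z + \frac{\epsilon}{2}$. (The resolvent identity $\overline{v} - v = \alpha(I - \alpha \overline{Q})^{-1}(\overline{Q} - Q)v$ gives the untruncated analogue $\|\overline{v} - v\|_{\infty} \leq \frac{\alpha}{1-\alpha}\|(\overline{Q} - Q)v\|_{\infty}$, using that rows of $(1-\alpha)(I - \alpha\overline{Q})^{-1}$ are probability vectors.)

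It remains to show $\P(Z > \epsilon(1-\alpha)/(2\alpha)) \leq \delta$. The delicate point is that $\overline{Q}$ is data-dependent -- which of its rows come from $\hat{Q}_{k_*}$ and which from the offline estimate $\tilde{Q}$ is governed by the random encountered set $U_{k_*}$. I would remove this exactly as Lemma~\ref{lemInvariantUnknownQ} is arranged: couple the run so that the $n$ samples drawn the first time $s$ is encountered are pre-drawn i.i.d.\ variables $\{X_{s,i}\}_{i=1}^n \sim Q(s,\cdot)$, define $\hat{Q}'(s,\cdot) = \frac{1}{n}\sum_{i=1}^n 1(X_{s,i} = \cdot)$ for every $s \in \mathcal{S}$, and observe that $\overline{Q}(s,\cdot)$ equals $\hat{Q}'(s,\cdot)$ on $\{s \in U_{k_*}\}$ and $\tilde{Q}(s,\cdot)$ on $\{s \notin U_{k_*}\}$. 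Each of $\hat{Q}'(s,\cdot)$ and $\tilde{Q}(s,\cdot)$ is, marginally, the empirical distribution of $n$ i.i.d.\ draws from $Q(s,\cdot)$, so for any deterministic $w$ with $\|w\|_{\infty} \leq \|c\|_{\infty}$, $(\overline{Q}(s,\cdot) - Q(s,\cdot))w$ is -- on each of the two events -- a centered average of $n$ i.i.d.\ summands confined to an interval of length $\|c\|_{\infty}$. Hoeffding's inequality together with a union bound over the two cases, over $s \in \mathcal{S}$, and over $l \in \{0,\ldots,T-1\}$ then yields $\P(Z > t) \leq c_1 S T \exp(-c_2 n t^2/\|c\|_{\infty}^2)$; taking $t = \epsilon(1-\alpha)/(2\alpha)$, forcing the right side to be at most $\delta$, and recalling $T = \ceil*{\log(4\|c\|_{\infty}/\epsilon)/(1-\alpha)}$ reproduces $n^*(\epsilon,\delta)$ once the constants are tracked carefully.

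The main obstacle is the tension in Steps~2 and~3 between the nonlinear, randomness-dependent object $(I - \alpha\overline{Q})^{-1}$ (equivalently, the powers $\overline{Q}^j$), which is correlated with the fluctuation $\overline{Q} - Q$, and the fact that $\overline{Q}$ is assembled from data-dependently chosen samples. Both are defused by structural observations rather than heavy probabilistic tools: row-stochasticity of $\overline{Q}$ furnishes a \emph{deterministic} $\ell_1$ bound on the inverse and its partial sums, decoupling them from $\overline{Q} - Q$; and introducing the offline estimate $\tilde{Q}$ on unencountered rows (plus pre-drawing samples) replaces $\overline{Q}$ by bona fide empirical distributions for which standard Chernoff bounds apply.
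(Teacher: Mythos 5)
Your proposal is correct and follows the same architecture as the paper's proof: the reduction $\|\hat{v}_{k_*}-\overline{v}\|_\infty\le\epsilon$ via the $\overline{Q}$-invariant, truncation of the Neumann series at $T=\ceil*{\log(4\|c\|_\infty/\epsilon)/(1-\alpha)}$, the telescoping/row-stochasticity bound yielding $\|\overline{v}-v\|_\infty\le\frac{\alpha}{1-\alpha}\max_{l<T}\|(\overline{Q}-Q)Q^l c\|_\infty+\frac{\epsilon}{2}$, and a Hoeffding-plus-union-bound finish are all exactly the paper's steps (the paper iterates a triangle inequality rather than writing the telescoping sum explicitly, but the bound is identical). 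The one place you genuinely diverge is in neutralizing the data-dependence of $\overline{Q}$: the paper asserts that $\overline{Q}$ and the fully offline $\tilde{Q}$ have the same distribution and simply substitutes $\tilde{Q}$ into the tail probability, whereas you pre-draw the algorithm's samples, note that $\overline{Q}(s,\cdot)$ agrees with one of two honest i.i.d.\ empirical rows on the partition $\{s\in U_{k_*}\}$ versus $\{s\notin U_{k_*}\}$, and union-bound over the two cases. Your route is arguably more self-contained, since it only needs the \emph{marginal} law of each candidate row rather than a joint equality in distribution of the assembled random matrices; the price is an extra factor of $2$ in the union bound, so tracking constants you would obtain $n^*$ with $\log(2ST/\delta)$ replaced by $\log(4ST/\delta)$ --- a cosmetic weakening of the stated constant, worth noting but not a gap in the argument.
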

\begin{proof}
See Appendix \ref{proofThmAccuracy}.
\end{proof}

Theorem \ref{thmAccuracy} says that if we take $n \geq n^*(\epsilon,\delta)$ samples per state encountered, the estimate $\hat{v}_{k_*}$ produced by \texttt{Backward-EPE} will be $2\epsilon$-accurate. Since \texttt{Backward-EPE} encounters $|U_{k_*}|$ states by definition, the total number of samples needed to ensure $2\epsilon$-accuracy is $n^*(\epsilon,\delta) |U_{k_*}|$. Hence, our next goal is to bound $|U_{k_*}|$, in order to bound this overall complexity. By the backward exploration intuition discussed in Section \ref{secIntro}, we should expect a nontrivial bound $|U_{k_*}| = o(S)$ if the cost vector and supergraph are sufficiently sparse. However, even when both objects are maximally sparse, one can construct adversarial examples for which $U_{k_*} = \mathcal{S}$. For instance, suppose we restrict to $c$ having a single high-cost state and the supergraph to having the minimal number of edges $S$. Then taking $c = [1\ 0\ \ldots\ 0]$ and $A = 1_{S \times 1} e_1^{\tr}$ will satisfy this restriction, but will yield $U_{k_*} = \mathcal{S}$ (assuming $\epsilon < 1$). Note the key issue in this example (and, we suspect, in most adversarial examples) is the interaction between the cost vector and the supergraph; in particular, if high-cost states have high in-degrees, $|U_{k_*}|$ will be large (even if there are few high-cost states and few edges overall).

In light of this, our best hope for a nontrivial bound on $|U_{k_*}|$ is an average-case analysis; in particular, bounding $\E |U_{k_*}|$ while randomizing over the inputs of \texttt{Backward-EPE}. As it turns out, we only need to randomize over the cost vector (not the transition matrix). Roughly speaking, we will consider a random cost vector $C$ for which $\E C(s) = O ( \E \| C \|_1 / S )\ \forall\ s \in \mathcal{S}$, i.e.\ the expected cost of any given state does not dominate the average expected cost. For such cost vectors, the interaction between cost and in-degree discussed in the previous paragraph will ``average out'', and consequently the adversarial examples will not dominate in expectation. 

This intuition is formalized in the following theorem. Similar to Theorem \ref{thmAccuracy}, the proof exploits the $\overline{Q}$-invariant. Here the key observations are that  $\hat{v}_k(s) \leq \overline{v}(s)$ and that $\hat{v}_k(s)$ increases by at least $(1-\alpha) \epsilon$ at each $k$ for which $s_k = s$, which prevents certain states from being chosen as $s_k$ and thus (potentially) prevents their incoming supergraph neighbors from being encountered.

\begin{theorem} \label{thmEncountered}
Let $C$ be an $\R_+^S$-valued random vector s.t.\ $\E \|C\|_1 < \infty , \E C(s) \leq \beta \E \| C \|_1 / S = : \bar{c}$ for some absolute constant $\beta \in [1,\infty)$. Then if Algorithm \ref{algBackwardEPE} is initialized with cost vector $C$,
\begin{equation}
\E | U_{k_*} | \leq \frac{ S \bar{c} \bar{d} }{ \epsilon (1-\alpha) } ,
\end{equation}
where the expectation is with respect to $C$ and the randomness in Algorithm \ref{algBackwardEPE}.
\end{theorem}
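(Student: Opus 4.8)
The plan is to bound $|U_{k_*}|$ by a degree-weighted count of the selected states, use the $\overline{Q}$-invariant to replace that count by one involving $\overline{v}$, and then show $\E\,\overline{v}(s)\le\bar c$ for every $s$.

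\emph{Combinatorial reduction and the invariant.} For $s\in\mathcal S$ let $n_s=|\{k\in\{1,\dots,k_*\}:s_k=s\}|$ be the number of iterations at which $s$ is selected. Since $U_{k_*}=\bigcup_{k=1}^{k_*}N_{in}(s_k)$, a state $s$ lies in $U_{k_*}$ only if $A(s,s_k)=1$ for some selected $s_k$, so
$$|U_{k_*}|=\sum_{s\in\mathcal S}1(s\in U_{k_*})\le\sum_{s\in\mathcal S}\sum_{s':A(s,s')=1}1(n_{s'}\ge1)=\sum_{s'\in\mathcal S}d_{in}(s')\,1(n_{s'}\ge1).$$
Next I would invoke the $\overline{Q}$-invariant $\hat v_{k_*}(s')=\overline{v}(s')-\overline{\mu_{s'}}r_{k_*}$: since $r_k\ge0$ (induction from $r_0=C\ge0$, $\hat Q_k\ge0$) and $\overline{\mu_{s'}}$ is nonnegative, $\hat v_{k_*}(s')\le\overline{v}(s')$. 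On the other hand, whenever $s_k=s'$ we have $r_{k-1}(s')=\|r_{k-1}\|_\infty>\epsilon$ and $\hat v_k(s')=\hat v_{k-1}(s')+(1-\alpha)r_{k-1}(s')$, so summing the $n_{s'}$ increments from $\hat v_0(s')=0$ gives $n_{s'}(1-\alpha)\epsilon<\hat v_{k_*}(s')\le\overline{v}(s')$, hence $1(n_{s'}\ge1)\le\overline{v}(s')/((1-\alpha)\epsilon)$. Combining, $|U_{k_*}|\le\frac{1}{(1-\alpha)\epsilon}\sum_{s'}d_{in}(s')\overline{v}(s')$, and since the $d_{in}(s')$ are deterministic it remains to prove $\E\,\overline{v}(s')\le\bar c$ for each $s'$; this yields $\E|U_{k_*}|\le\frac{\bar c}{(1-\alpha)\epsilon}\sum_{s'}d_{in}(s')=\frac{S\bar c\bar d}{(1-\alpha)\epsilon}$.

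\emph{The key bound $\E\,\overline{v}(s')\le\bar c$.} Write $\overline{v}(s')=\sum_{s''}\overline{\mu_{s'}}(s'')C(s'')$ with $\sum_{s''}\overline{\mu_{s'}}(s'')=1$ a.s.\ (row stochasticity of $\overline{Q}$). Were $\overline{\mu_{s'}}$ independent of $C$, we would be done: $\E\,\overline{v}(s')=\sum_{s''}\E[\overline{\mu_{s'}}(s'')]\E[C(s'')]\le\bar c\,\E\sum_{s''}\overline{\mu_{s'}}(s'')=\bar c$. The difficulty is that $\overline{Q}$, hence $\overline{\mu_{s'}}$, depends on $C$ through the sequence of selected states, and in fact $\overline{v}(s')$ is an \emph{upward}-biased estimate. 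I would resolve this via two facts. (i) $\E[\overline{Q}(s,\cdot)\mid C]=Q(s,\cdot)$ for every $s$: row $s$ of $\overline{Q}$ is the empirical row from $\{X_{s,i}\}$ if $s\in U_{k_*}$ and from $\{Y_{s,i}\}$ otherwise, both unbiased for $Q(s,\cdot)$, and crucially the algorithm consults row $s$ of $\hat Q$ only after $s$ is encountered, so the trajectory that decides whether $s\in U_{k_*}$ is independent of $\{X_{s,i}\}$; thus each candidate row is independent (given $C$) of the event $\{s\in U_{k_*}\}$, and the two cases average to $Q(s,\cdot)$. (ii) The estimation error self-cancels: the covariance matrix of an empirical row $\frac1n\sum_i1(X_{s,i}=\cdot)$ has identically zero row sums (its entries sum to $1$ deterministically), so the positive bias created whenever a walk revisits a row is offset by the residual mass of that row once one sums over the walk's continuations. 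Expanding $\overline{v}(s')=(1-\alpha)\sum_{t\ge0}\alpha^t(\overline{Q}^tC)(s')$ and applying (i)--(ii) walk by walk — first conditioning on the algorithm's trajectory to freeze which rows are online vs.\ offline, then peeling off rows one at a time — should give $\E[(\overline{Q}^tC)(s')]\le\bar c$ for all $t$, whence $\E\,\overline{v}(s')\le(1-\alpha)\sum_t\alpha^t\bar c=\bar c$.

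\emph{Main obstacle.} The combinatorial reduction is bookkeeping; the real content is $\E\,\overline{v}(s')\le\bar c$, since $\overline{v}$ is a nonlinear, biased functional of an empirical transition matrix whose support pattern is chosen adaptively from $C$, so neither independence nor unbiasedness is available off the shelf. Making the ``bias cancels when summed over a row'' heuristic rigorous, and verifying that the adaptive choice of $U_{k_*}$ genuinely preserves $\E[\overline{Q}(s,\cdot)\mid C]=Q(s,\cdot)$, is the crux. (A cruder alternative would bound $|\overline{v}(s')-v(s')|$ as in Theorem~\ref{thmAccuracy} and use $\E\,v(s')=(1-\alpha)e_{s'}^{\tr}(I-\alpha Q)^{-1}\E C\le\bar c$, but that reintroduces a dependence on the per-state sample count $n$ that the clean statement avoids.)
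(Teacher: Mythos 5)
Your combinatorial reduction is exactly the paper's (the paper uses $\sum_{s'} d_{in}(s')\,n_{s'}$ where you use the slightly tighter $\sum_{s'} d_{in}(s')\,1(n_{s'}\ge 1)$; both go through), and you correctly isolate the crux, $\E\,\overline{v}(s)\le\bar c$. But that step is left as a genuine gap: your route via (i) row-wise unbiasedness plus (ii) a ``bias self-cancellation'' over walks is both incomplete and not the mechanism that works. Note that $\E[\overline{Q}^t(s,\cdot)\mid C]\ne Q^t(s,\cdot)$ for $t\ge 2$ (walks revisiting a row produce products of correlated entries), and single-row unbiasedness does not control $\E[(\overline{Q}^tC)(s')]$ because the cross-row and cross-$C$ correlations are exactly what you have not removed; the heuristic in (ii) is not a proof and, as far as I can see, does not close this.

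The resolution is the sentence you wrote and then discarded: ``were $\overline{\mu_{s'}}$ independent of $C$, we would be done.'' It \emph{is} independent of $C$ --- this is the entire purpose of defining $\overline{Q}$ with offline fill-in. Conditioned on $C$, every row of $\overline{Q}$ is an empirical distribution built from $n$ i.i.d.\ draws of $Q(s,\cdot)$ (online draws $X_{s,i}$ if $s\in U_{k_*}$, offline draws $Y_{s,i}$ otherwise), and these have the same law; coupling $X_{s,i}=Y_{s,i}$ gives $\overline{Q}=\tilde{Q}$ identically, so $\E[\overline{Q}^t(s,\cdot)\mid C]=\E[\tilde{Q}^t(s,\cdot)]$ is a deterministic probability vector. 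Then
$\E\,\overline{v}(s)=(1-\alpha)\sum_t\alpha^t\,\E[\tilde{Q}^t(s,\cdot)]\,\E C\le\bar c$
by row stochasticity alone --- no unbiasedness of $\tilde{Q}^t$ for $Q^t$ and no cancellation argument are needed. This decorrelation is precisely why the theorem uses the $\overline{Q}$-invariant rather than the $\underline{Q}$-invariant (for $\underline{Q}$, which rows are exact depends on $C$ through $U_{k_*}$, and the argument fails); see Remark~\ref{remOverOrUnder_enc}.
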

\begin{proof}
See Appendix \ref{proofThmEncountered}.
\end{proof}


We now return to interpret our results and derive \texttt{Backward-EPE}'s overall sample complexity, which (we recall) is $n^*(\epsilon,\delta) |U_{k_*} |$. In the worst case, $|U_{k_*}| = \Omega(S)$, and thus the worst-case sample complexity for fixed $c$ is $O(S  n^*(\epsilon,\delta))$. Neglecting $\log \log$ factors and constants, ignoring $\log$ terms for quantities that have polynomial scaling (e.g.\ writing $\log (1 / (1-\alpha) ) / (1-\alpha)^2$ as simply $1/ (1-\alpha)^2$), and assuming $\alpha$ is either constant or grows to $1$, Theorem \ref{thmAccuracy} implies
\begin{equation}
S  n^*(\epsilon,\delta) = O \left(  S \log ( S / \delta ) \| c \|_{\infty}^2 \epsilon^{-2} (1-\alpha)^{-2} \right) .
\end{equation}
For comparison, the complexity of the forward approach is
\begin{equation}\label{eqStandAnyCase}
O \left(  S \log ( S / \delta ) \| c\|_{\infty}^2  \epsilon^{-2} (1-\alpha)^{-3}  \right) 
\end{equation}
(see Appendix \ref{appStandardApproach}). Thus, in the worst case \texttt{Backward-EPE} has similar complexity to that of the forward approach, with a slightly improved dependence on the discount factor $\alpha$. 

In the average case, however, the sample complexity of \texttt{Backward-EPE} can be dramatically better than the forward approach. In particular, Theorem \ref{thmEncountered} implies average-case sample complexity
\begin{equation}
\E |U_{k_*}| \times n^*(\epsilon,\delta) =  O \left( \frac{S \bar{c} \bar{d}}{ \epsilon(1-\alpha) }  \times \frac{  \log ( S / \delta ) \| C \|_{\infty}^2 }{ \epsilon^2 (1-\alpha)^2 } \right) = O \left( \frac{ \| C \|_1 \bar{d} \log ( S / \delta ) \| C \|_{\infty}^2 }{ \epsilon^3 (1-\alpha)^3 } \right) .
\end{equation}
(This argument is not precise, since $\|C\|_{\infty}$ is random in Theorem \ref{thmEncountered}; we return to address this shortly.) Thus, if $\alpha$, $\delta$, and $\| C \|_{\infty} / \epsilon$ are constants, \texttt{Backward-EPE} has average case complexity
\begin{equation}\label{eqBackEpeAvgCase}
O \left(  ( \| C \|_1 / \| C \|_{\infty}  ) \times \bar{d} \times \log S  \right) .
\end{equation}
Interestingly, \eqref{eqBackEpeAvgCase} exactly captures the intuition that backward exploration is efficient when the costs and supergraph are sufficiently sparse, since $\|C\|_1 / \|C\|_{\infty}$ and $\bar{d}$ quantify cost and supergraph sparsity, respectively. We also note that when $\alpha$, $\delta$, and $\| C \|_{\infty} / \epsilon$ are constants, the forward approach's complexity \eqref{eqStandAnyCase} becomes simply $O(S \log S)$. In the extreme case, $\|C\|_1 / \|C\|_{\infty}, \bar{d} = O(1)$ and \texttt{Backward-EPE} offers a dramatic reduction in sample complexity; namely, by a factor of $S$.

Though this average-case argument is not precise, we can make it rigorous with further assumptions on $C$. For example, the following corollary considers random binary cost vectors with $H$ nonzero entries. Such cost vectors could arise, for example, in MDP models of games, where states corresponding to losing configurations of the game have unit cost and other states have zero cost.
\begin{corollary} \label{corHnonzeros}
Fix $H \in \mathcal{S}$ and define $\mathcal{C}_H = \{ \sum_{s=1}^S a_s e_s : a_s \in \{0,1\}\ \forall\ s \in \mathcal{S}, \sum_{s=1}^S a_s = H \}$ to be the set of binary vectors with $H$ nonzero entries. Assume the cost vector $C$ is chosen uniformly at random from $\mathcal{C}_H$ and $\alpha,\delta,\epsilon$ are constants. Then to guarantee $\P ( \| \hat{v}_{k_*} - v \|_{\infty} \geq 2 \epsilon ) \leq \delta$, \texttt{Backward-EPE} requires $O( \min \{ H \bar{d} , S \}  \log S )$ samples in expectation.
\end{corollary}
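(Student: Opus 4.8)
The plan is to combine Theorems \ref{thmAccuracy} and \ref{thmEncountered} after verifying their hypotheses for the uniform law on $\mathcal{C}_H$. Every $C \in \mathcal{C}_H$ satisfies $\|C\|_1 = H$ and $\|C\|_\infty = 1$ deterministically, and by symmetry of the uniform distribution on $\mathcal{C}_H$ we have $\E C(s) = H/S$ for each $s \in \mathcal{S}$. Hence the hypothesis of Theorem \ref{thmEncountered} holds with the absolute constant $\beta = 1$ and $\bar c = H/S$, yielding $\E |U_{k_*}| \leq S\bar c\bar d / (\epsilon(1-\alpha)) = H\bar d/(\epsilon(1-\alpha))$. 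Separately, $U_{k_*} \subseteq \mathcal{S}$ always, so $|U_{k_*}| \leq S$ surely, and therefore
\[
\E |U_{k_*}| \leq \min\Bigl\{ S,\ \frac{H\bar d}{\epsilon(1-\alpha)} \Bigr\}.
\]

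Next I would set the per-state sample count to $n = \lceil n^*(\epsilon,\delta)\rceil$, so that Theorem \ref{thmAccuracy} delivers the accuracy guarantee $\P(\|\hat v_{k_*} - v\|_\infty \geq 2\epsilon) \leq \delta$. Plugging $\|c\|_\infty = 1$ into the formula for $n^*$ and treating $\alpha, \delta, \epsilon$ as constants, the leading factor $2\|c\|_\infty^2\alpha^2/(\epsilon^2(1-\alpha)^2)$ is a constant and the logarithm is $\log(\Theta(S)) = \Theta(\log S)$, so $n = O(\log S)$.

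Finally I would count samples. Inspecting Algorithm \ref{algBackwardEPE}, $n$ fresh samples of $Q(s,\cdot)$ are drawn exactly once for each state $s$ that enters the encountered set (at the unique iteration $k$ with $s \in N_{in}(s_k)\setminus U_{k-1}$), so the total sample count equals $n|U_{k_*}|$. Taking expectations,
\[
\E\bigl[n|U_{k_*}|\bigr] = n\,\E|U_{k_*}| \leq O(\log S)\cdot \min\Bigl\{ S,\ \frac{H\bar d}{\epsilon(1-\alpha)} \Bigr\} = O\bigl(\min\{H\bar d,\ S\}\log S\bigr),
\]
where the last equality absorbs the constant $1/(\epsilon(1-\alpha))$ into the $O(\cdot)$, using $\min\{a, cb\} = O(\min\{a,b\})$ for a fixed constant $c > 0$.

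Since the statement is a corollary of the two theorems, I do not expect a genuine obstacle. The only points needing care are (i) checking $\E C(s) \leq \beta\,\E\|C\|_1/S$ with $\beta$ an absolute constant, which holds with $\beta = 1$ purely by symmetry, and (ii) combining the average-case bound on $\E|U_{k_*}|$ with the worst-case bound $|U_{k_*}| \leq S$ so that the $\min$ appears --- using either bound in isolation would give only $O(H\bar d\log S)$ or $O(S\log S)$.
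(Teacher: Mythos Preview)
Your proposal is correct and follows essentially the same approach as the paper: verify $\E C(s)=H/S$ by symmetry to invoke Theorem~\ref{thmEncountered} with $\bar c=H/S$, combine with the trivial bound $|U_{k_*}|\leq S$ to get the $\min$, and use $\|C\|_\infty=1$ pointwise so that $n^*(\epsilon,\delta)=O(\log S)$ is deterministic and Theorem~\ref{thmAccuracy} applies (the paper notes explicitly, via Remark~\ref{remAccProof1Crand}, that Theorem~\ref{thmAccuracy} extends to random $C$ when the bound on $n$ holds almost surely, which you use implicitly).
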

\begin{proof}
See Appendix \ref{proofCorHnonzeros}.
\end{proof}


To conclude this section and illustrate our analysis, we present empirical results in Figure \ref{figBackwardNumerical}. Here we generate random problem instances $Q,c$ in a manner that yields three different cases of the complexity factor $\bar{d} \|C\|_1 / \|C\|_{\infty}$ identified above; roughly, $\Theta(1)$, $\Theta(\sqrt{S})$, and $\Theta(S)$ (left). In all cases, the sample complexity of \texttt{Backward-EPE} decays relative to that of the forward approach, suggesting sublinear complexity (middle). Moreover, the different scalings of $\bar{d} \|C\|_1 / \|C\|_{\infty}$ reflect in different rates of decay in relative complexity, suggesting $\bar{d} \|C\|_1 / \|C\|_{\infty}$ indeed determines sample complexity. We also note algorithmic parameters are chosen to ensure both algorithms yield similar $l_{\infty}$ error (right). Error bars show standard deviation across problem instances. Further details regarding the experimental setup can be found in Appendix \ref{appExperiment}.

\begin{figure}
\centering
\includegraphics[height=2.25in]{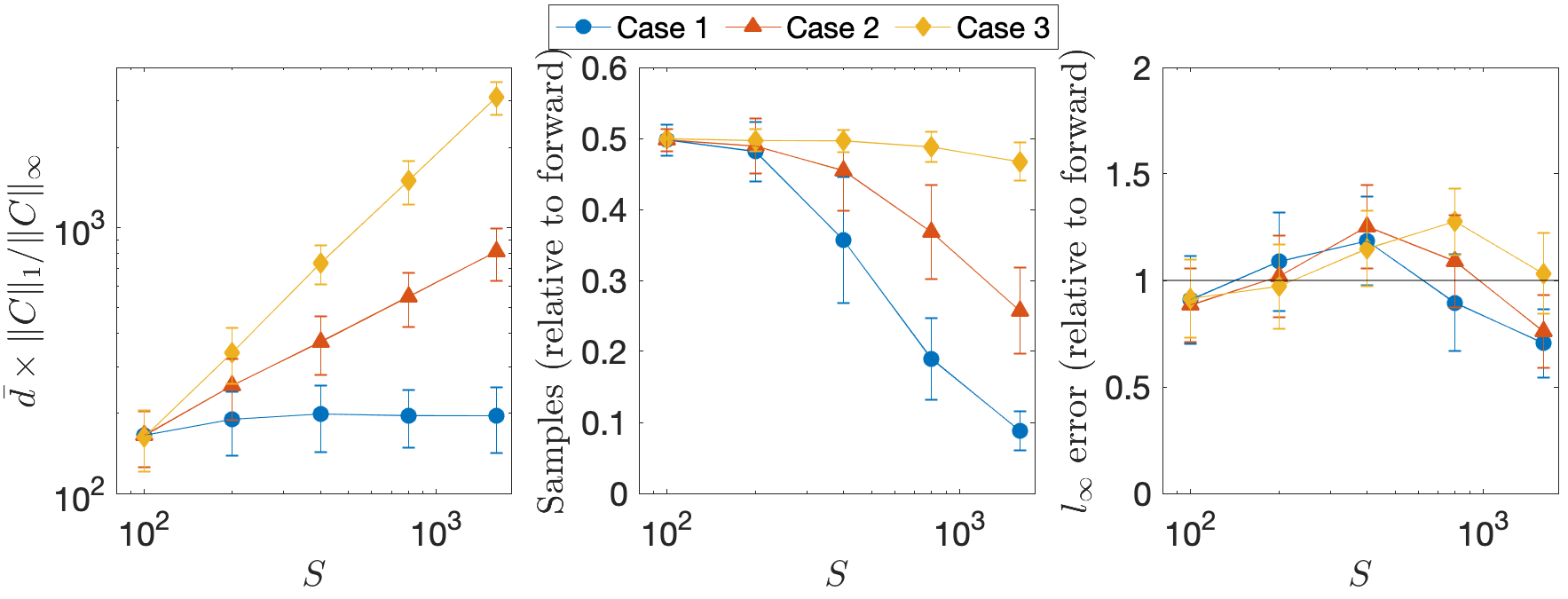}
\caption{Numerical illustration of \texttt{Backward-EPE}} \label{figBackwardNumerical}
\end{figure}

\section{Bidirectional empirical policy evaluation} \label{secBidirectional}

Our second algorithm is called \texttt{Bidirectional-EPE} and is inspired by the \texttt{Bidirectional- PPR} PageRank estimator from \cite{lofgren2016personalized} (see Appendix \ref{appComparison} for further discussion of this PageRank estimator). As will be seen, this algorithm is conducive to a stronger accuracy guarantee; namely, a (pseudo)-relative error guarantee. The utility of such a guarantee is that the resulting estimates tend to better preserve the ordering of the actual value function when compared to an $l_{\infty}$ guarantee. Preserving this ordering is important in the problem of finding good policies; e.g.\ in the greedy update of policy iteration mentioned in Section \ref{secIntro}.

As its name suggests, \texttt{Bidirectional-EPE} proceeds in two stages: it first conducts backward exploration using \texttt{Backward-EPE}, then improves the resulting estimate via forward exploration. The analysis of this bidirectional approach relies on the $\underline{Q}$-invariant \eqref{eqInvariantOverUnder}. Similar to Theorem \ref{thmAccuracy}, we can make $| \underline{v}(s) - v(s) |$ small by taking $n$ large in \texttt{Backward-EPE}; when this holds, we have
\begin{equation}\label{eqUnderInvariantApprox}
v(s) \approx \underline{v}(s) = \hat{v}_{k_*}(s) + \underline{\mu_s} r_{k_*} . 
\end{equation}
Since $\underline{\mu_s}$ is a probability distribution over $\mathcal{S}$, the residual term in \eqref{eqUnderInvariantApprox} satisfies
\begin{equation}
\underline{\mu_s} r_{k_*} = \E_{Z_s \sim \underline{\mu_s}} r_{k_*}(Z_s) \approx \frac{1}{n_F} \sum_{i=1}^{n_F} r_{k_*} ( Z_{s,i} ) ,
\end{equation}
where in the approximate equality $\{ Z_{s,i} \}_{i=1}^{n_F}$ are distributed as $\underline{\mu_s}$ and $n_F$ is large. Hence, by \eqref{eqUnderInvariantApprox},
\begin{equation}\label{eqBidirectionalApproxEq}
v(s) \approx \hat{v}_{k_*}(s) + \frac{1}{n_F} \sum_{i=1}^{n_F} r_{k_*} ( Z_{s,i} ) .
\end{equation}
Intuitively, the right side of \eqref{eqBidirectionalApproxEq} is a more accurate estimate of $v(s)$ than $\hat{v}_{k_*}(s)$ alone; the only remaining question is how to generate $\{ Z_{s,i} \}_{i=1}^{n_F}$. This can indeed be done in our model; namely, by generating $\textrm{Geometric}(1-\alpha)$-length trajectories on $\underline{Q}$. More specifically, given $\underline{Q}$, we first  generate a $\textrm{Geometric}(1-\alpha)$ random variable $L_{s,i}$ and set $Z_{s,i}^{0} = s$; we then sample $Z_{s,i}^{t}$ from $\underline{Q}(Z_{s,i}^{t-1},\cdot)$ for each $t \in [L_{s,i}]$; and finally we set $Z_{s,i} = Z_{s,i}^{L_{s,i}}$. Then conditioned on $\underline{Q}$, $Z_{s,i}$ is distributed as $\underline{\mu_s}$. To see why, let $\P^{\underline{Q}}$ denote probability conditioned on $\underline{Q}$ and observe
\begin{equation}
\P^{\underline{Q}} ( Z_{s,i} = s' ) = \sum_{t=0}^{\infty} \P^{\underline{Q}} ( Z_{s,i} = s' | L_{s,i} = t ) \P^{\underline{Q}} ( L_{s,i} = t ) = \sum_{t=0}^{\infty} \underline{Q}^t(s,s') (1-\alpha) \alpha^t = \underline{\mu_s}(s') .
\end{equation}
Thus, sampling from $\underline{\mu_s}$ amounts sampling from $\underline{Q}(s,\cdot)$. To do so, we either sample from $Q(s,\cdot)$ (if $s \notin U_{k_*}$) or from $\hat{Q}_{k_*}(s,\cdot)$ (if $s \in U_{k_*}$); the former is exactly what was done in \texttt{Backward-EPE}, and the latter can be done after running \texttt{Backward-EPE}. Put differently, to generate $Z_{s,i}$ we sample from $Q(s,\cdot)$ unless we have already sampled from $Q(s,\cdot)$ during \texttt{Backward-EPE}, in which case we sample from the empirical estimate $\hat{Q}_{k_*}(s,\cdot)$ obtained during \texttt{Backward-EPE}.

The \texttt{Bidirectional-EPE} algorithm is formally defined in Algorithm \ref{algBidirectionalEPE}. As above, write $n_F$ for the per-state forward trajectory count; we also write $n_B$ for the per-state sample count in the \texttt{Backward-EPE} subrountine. We denote the ultimate estimate of $v$ by $\hat{v}_{BD}$. 

\begin{algorithm}
\caption{ \texttt{Bidirectional-EPE} } \label{algBidirectionalEPE}
\KwIn{Sampler for transition matrix $Q$; cost vector $c$; discount factor $\alpha$; supergraph in-neighbors $\{N_{in}(s)\}_{s=1}^S$; termination parameter $\epsilon$; per-state backward, forward sample counts $n_B$, $n_F$}

Run \texttt{Backward-EPE} (Algorithm \ref{algBackwardEPE}) with inputs $Q$ sampler, $c$, $\alpha$, $\{N_{in}(s)\}_{s=1}^S$, $\epsilon$, $n_B$

Let $\hat{v}_{k_*}$, $r_{k_*}$, $U_{k_*}$, $\hat{Q}_{k_*}$ be estimate vector, residual vector, encountered states, and $Q$ estimate at termination of \texttt{Backward-EPE}, and define $\underline{Q}$ in \eqref{eqUnderDefn}

\For{$s \in \mathcal{S}$}{

	Generate samples $\{ Z_{s,i} \}_{i=1}^n$ from $\underline{\mu_s}$, set $\hat{v}_{BD}(s) = \hat{v}_{k_*}(s) + \frac{1}{n_F} \sum_{i=1}^{n_F} r_{k_*} ( Z_{s,i} )$

}
\KwOut{Estimate $\hat{v}_{BD}$ of $v = (1-\alpha) \sum_{t=0}^{\infty} \alpha^t Q^t c$}
\end{algorithm}


As alluded to above, \texttt{Bidirectional-EPE} is conducive to a pseudo-relative error guarantee. In particular, given relative error tolerance $\epsilon_{rel} \in (0,1)$ and absolute tolerance $\epsilon_{abs} > 0$, Theorem \ref{thmAccuracy2} shows that with high probability, the estimate $\hat{v}_{BD}$ satisfies
\begin{equation}
( 1 - \epsilon_{rel} ) v(s) - \epsilon_{abs} \leq \hat{v}_{BD}(s) \leq (1+\epsilon_{rel}) v(s) + \epsilon_{abs}\ \forall\ s \in \mathcal{S} .
\end{equation}
Thus, \texttt{Bidirectional-EPE} permits a relative-plus-absolute accuracy guarantee. (Note that since $v(s)$ can be arbitrarily small in general, we should not expect a relative error guarantee for all states.) This guarantee is formalized in the next theorem. As suggested by \eqref{eqUnderInvariantApprox}-\eqref{eqBidirectionalApproxEq}, the proof first shows $\underline{v} \approx v$ for $n_B$ large; conditioned on $\underline{v} \approx v$, we then show $\E_{Z_s \sim \underline{\mu_s}} r_{k_*}(Z_s) \approx \frac{1}{n_F} \sum_{i=1}^{n_F} r_{k_*} ( Z_{s,i} )$ for $n_F$ large, using separate Chernoff bounds for two cases of $\E_{Z_s \sim \underline{\mu_s}} r_{k_*}(Z_s)$.

\begin{theorem} \label{thmAccuracy2}
Fix $\epsilon_{rel} \in (0,1)$ and $\epsilon_{abs}, \delta > 0$, and define
\begin{gather}
n^*_F(\epsilon_{rel}, \epsilon_{abs}, \delta) = \frac{ 324 \epsilon \log ( 4 S / \delta ) }{ \epsilon_{rel}^2 \epsilon_{abs} } , \\
n^*_B(\epsilon_{rel}, \epsilon_{abs}, \delta) = \frac{3 \log ( 4 S^2 / \delta ) }{ ( \log ( 1 + \epsilon_{rel} / 2 ) )^2 \min_{i,j \in \mathcal{S}: Q(i,j) > 0} Q(i,j) }  \ceil*{ \frac{\log ( 2 \| c \|_{\infty} / \epsilon_{abs} )}{ (1-\alpha) } }^2 .
\end{gather}
Then assuming $n_F \geq n^*_F(\epsilon_{rel}, \epsilon_{abs}, \delta)$ and $n_B \geq n^*_B(\epsilon_{rel}, \epsilon_{abs}, \delta)$ in Algorithm \ref{algBidirectionalEPE}, we have
\begin{equation}\label{eqAccuracyGuarantee2}
\P ( \cup_{s=1}^S \{ | \hat{v}_{BD}(s) - v(s) | > \epsilon_{rel} v(s) + \epsilon_{abs} \} ) \leq \delta .
\end{equation}
\end{theorem}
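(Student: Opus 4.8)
The plan is to split the error as $|\hat{v}_{BD}(s) - v(s)| \leq |\hat{v}_{BD}(s) - \underline{v}(s)| + |\underline{v}(s) - v(s)|$ and bound each summand by $\tfrac{\epsilon_{rel}}{2}v(s) + \tfrac{\epsilon_{abs}}{2}$. The second (``backward'') summand is controlled by taking $n_B$ large enough that $\underline{Q}$ is close to $Q$; the first is controlled via the $\underline{Q}$-invariant from Lemma~\ref{lemInvariantUnknownQ} at $k = k_*$, which reads $\hat{v}_{k_*}(s) + \underline{\mu_s} r_{k_*} = \underline{v}(s)$ and hence gives $\hat{v}_{BD}(s) - \underline{v}(s) = \tfrac{1}{n_F}\sum_{i=1}^{n_F} r_{k_*}(Z_{s,i}) - \underline{\mu_s} r_{k_*}$, a Monte Carlo error. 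Throughout, set $T = \ceil*{\log(2\|c\|_\infty/\epsilon_{abs})/(1-\alpha)}$, so $\alpha^T\|c\|_\infty \leq \epsilon_{abs}/2$ (using $\log\alpha \leq \alpha-1$); set $\eta = \log(1+\epsilon_{rel}/2)/T$ (noting $\eta \in (0,1)$ since $\epsilon_{rel}<1$); and assume $\epsilon_{abs} < 2\|c\|_\infty$, else $v(s)\leq\|c\|_\infty\leq\epsilon_{abs}/2$ and the claim is trivial. Also note $r_{k_*}\geq 0$ entrywise (induction on the residual update, as $c,\hat{Q}_k\geq 0$) and $\|r_{k_*}\|_\infty \leq \epsilon$ by definition of $k_*$, so each $r_{k_*}(Z_{s,i})$ lies in $[0,\epsilon]$.

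\textbf{Backward term.} Let $\mathcal{E}$ be the event that $|\hat{Q}_{k_*}(s,s') - Q(s,s')| \leq \eta\, Q(s,s')$ for all $s,s'\in\mathcal{S}$; this holds trivially whenever $Q(s,s')=0$ since then $\hat{Q}_{k_*}(s,s')=0$. Each entry $\hat{Q}_{k_*}(s,s')$ with $s\in U_{k_*}$ is an average of $n_B$ i.i.d.\ $\mathrm{Bernoulli}(Q(s,s'))$ variables, so (coupling to a preassigned batch of $n_B$ samples per state, independent of the adaptive choice of $U_{k_*}$) a multiplicative Chernoff bound and a union bound over the at most $S^2$ entries with $Q(s,s')>0$ give $\P(\mathcal{E}^c) \leq 2 S^2 \exp(-n_B\, \eta^2 \min_{i,j:Q(i,j)>0} Q(i,j)/3) \leq \delta/2$ once $n_B \geq n^*_B(\epsilon_{rel},\epsilon_{abs},\delta)$ (plug in $\eta$ and $T$ and use $\log(1+\eta)\leq\eta$). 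On $\mathcal{E}$, each row $\underline{Q}(s,\cdot)$, being $Q(s,\cdot)$ or $\hat{Q}_{k_*}(s,\cdot)$, lies entrywise in $[(1-\eta)Q(s,\cdot),(1+\eta)Q(s,\cdot)]$, so by induction $\underline{Q}^t(s,s')\in[(1-\eta)^tQ^t(s,s'),(1+\eta)^tQ^t(s,s')]$; since $c\geq 0$, the same holds for $\underline{Q}^t(s,\cdot)c$ versus $Q^t(s,\cdot)c$. Truncating the series for $\underline{v}(s)$ at $T$ (the tail is $\leq\alpha^T\|c\|_\infty\leq\epsilon_{abs}/2$ since $\underline{Q}$ is row stochastic), and using $(1+\eta)^T\leq 1+\epsilon_{rel}/2$ and $(1-\eta)^T\geq 1-T\eta\geq 1-\epsilon_{rel}/2$, yields $|\underline{v}(s)-v(s)| \leq \tfrac{\epsilon_{rel}}{2}v(s) + \tfrac{\epsilon_{abs}}{2}$ for all $s$ on $\mathcal{E}$.

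\textbf{Forward term and conclusion.} Condition on the \texttt{Backward-EPE} run (which determines $\hat{Q}_{k_*},r_{k_*},U_{k_*},\hat{v}_{k_*}$, hence $\underline{Q},\underline{\mu_s},\underline{v}$). As established before the statement, the $Z_{s,i}$ are then i.i.d.\ $\sim\underline{\mu_s}$, so $Y_i := r_{k_*}(Z_{s,i})\in[0,\epsilon]$ are i.i.d.\ with mean $p_s := \underline{\mu_s}r_{k_*} = \underline{v}(s)-\hat{v}_{k_*}(s)\in[0,\underline{v}(s)]$. Apply a multiplicative Chernoff bound in two cases. If $p_s\leq\epsilon_{abs}$: the bound gives $\P(|\bar Y - p_s| > \epsilon_{abs}/2)\leq\delta/(2S)$ once $n_F$ exceeds a constant times $\epsilon\log(4S/\delta)/\epsilon_{abs}$. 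If $p_s>\epsilon_{abs}$: on $\mathcal{E}$ we have $p_s\leq\underline{v}(s)\leq(1+\tfrac{\epsilon_{rel}}{2})v(s)+\tfrac{\epsilon_{abs}}{2}<\tfrac32 v(s)+\tfrac{p_s}{2}$, hence $v(s)>p_s/3$; the multiplicative bound with relative tolerance $\epsilon_{rel}/6$ gives $\P(|\bar Y - p_s|>\tfrac{\epsilon_{rel}}{6}p_s)\leq\delta/(2S)$ once $n_F$ exceeds a constant times $\epsilon\log(4S/\delta)/(\epsilon_{rel}^2\epsilon_{abs})$, and $\tfrac{\epsilon_{rel}}{6}p_s<\tfrac{\epsilon_{rel}}{2}v(s)$. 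Either way, off a bad event of probability $\leq\delta/(2S)$, $|\hat{v}_{BD}(s)-\underline{v}(s)|=|\bar Y - p_s|\leq\tfrac{\epsilon_{rel}}{2}v(s)+\tfrac{\epsilon_{abs}}{2}$; tracking the constants pins the requirement at $n_F\geq n^*_F(\epsilon_{rel},\epsilon_{abs},\delta)$. A union bound over $s\in\mathcal{S}$ and over $\mathcal{E}^c$ gives that, with probability at least $1-\delta$, both the backward and forward bounds hold for every $s$, so $|\hat{v}_{BD}(s)-v(s)|\leq\epsilon_{rel}v(s)+\epsilon_{abs}$ for all $s$, which is \eqref{eqAccuracyGuarantee2}.

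\textbf{Main obstacle.} The delicate points are (i) propagating the \emph{multiplicative} (not additive) error from $\hat{Q}_{k_*}$ to $\underline{v}$ with precisely the right horizon $T$ and closeness $\eta$, so the resulting condition on $n_B$ matches $n^*_B$ --- this is exactly what forces $\min_{i,j:Q(i,j)>0}Q(i,j)$ and the squared logarithmic factor to appear; and (ii) the two-case forward Chernoff argument, where Hoeffding is too weak (it would yield $\epsilon^2$ rather than $\epsilon$ in $n^*_F$), so a Bernstein/multiplicative bound is needed, and one must use the $\mathcal{E}$-dependent comparison $p_s\lesssim v(s)$ to convert a relative-in-$p_s$ guarantee into a relative-in-$v(s)$ one. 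The independence bookkeeping (backward samples vs.\ the fresh $Q$-samples used inside the forward walks vs.\ the $Z_{s,i}$) is routine once one conditions on the entire \texttt{Backward-EPE} run.
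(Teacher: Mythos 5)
Your proposal is correct and follows essentially the same route as the paper's proof: the same decomposition into a backward term $|\underline{v}(s)-v(s)|$ (controlled via an entrywise multiplicative closeness event for the preassigned/offline sample batches, propagated through powers of $\underline{Q}$ up to horizon $T$ with per-step tolerance $\log(1+\epsilon_{rel}/2)/T$) and a forward Monte Carlo term (controlled by a two-case Chernoff argument that converts a relative-in-$\underline{\mu_s}r_{k_*}$ guarantee into a relative-in-$v(s)$ one using the backward event). The only differences are cosmetic --- you use Bernoulli's inequality where the paper uses a convexity trick for the lower bound $(1-\lambda)^{\bar{T}} \geq 1-\epsilon_{rel}/2$, and your case-split threshold ($p_s \leq \epsilon_{abs}$ rather than the paper's $\epsilon_{abs}/12$) shifts the constants and leaves a small intermediate regime where you'd need a Chernoff bound valid for relative deviations between $1$ and $6$, but this is routine to patch and does not change the argument.
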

\begin{proof}
See Appendix \ref{proofThmAccuracy2}.
\end{proof}


We next discuss Theorem \ref{thmAccuracy2}. To simplify notation, we restrict to the setting of Corollary \ref{corHnonzeros}; however, the key insights extend to the more general setting of Theorem \ref{thmEncountered}. Also, we assume the relative error tolerance $\epsilon_{rel}$, the discount factor $\alpha$, and inaccuracy probability $\delta$ are constants independent of $S$. Finally, we note Theorem \ref{thmAccuracy2} holds for random $C$; see Remark \ref{remAccProof2Crand} in Appendix \ref{proofThmAccuracy2}.

We begin by deriving expressions for the asymptotic sample complexity of \texttt{Bidirectional- EPE} in the setting of Corollary \ref{corHnonzeros}. For the backward exploration stage (i.e.\ the \texttt{Backward-EPE} subroutine), we require per-state sample complexity $n^*_B(\epsilon_{rel}, \epsilon_{abs}, \delta)$; note this is deterministic since $\|C\|_{\infty} = 1$ pointwise in Corollary \ref{corHnonzeros}. Thus, the average-case sample complexity is (by Corollary \ref{corHnonzeros}),
\begin{equation}\label{eqBidirCompBack}
 n^*_B(\epsilon_{rel}, \epsilon_{abs}, \delta) \E  |U_{k_*} | = O \left( \frac{ \log(S) \log ( 1 / \epsilon_{abs} ) }{  \min_{i,j \in \mathcal{S}: Q(i,j) > 0} Q(i,j) } \times \frac{H \bar{d}}{\epsilon} \right).
\end{equation} 
For the forward exploration stage, we require $n_F^*(\epsilon_{rel},\epsilon_{abs},\delta) = O(\epsilon \log(S) / \epsilon_{abs} )$ trajectories of expected length $\alpha/(1-\alpha)$ for each of $S$ states. We are assuming $\alpha$ is a constant, and thus the expected forward complexity is simply $O( \epsilon S \log S / \epsilon_{abs} )$. Combined with \eqref{eqBidirCompBack}, and writing $K_{BD}$ for the overall expected sample of \texttt{Bidirectional-EPE} in the setting of Corollary \ref{corHnonzeros},
\begin{equation}\label{eqBidirCompCorSetting}
K_{BD} = O \left( \frac{ H \bar{d} \log(S) \log ( 1 / \epsilon_{abs} ) }{ \epsilon  \min_{i,j \in \mathcal{S}: Q(i,j) > 0} Q(i,j) } + \frac{\epsilon S \log S}{\epsilon_{abs} }  \right) .
\end{equation}
Here the termination parameter $\epsilon$ for the \texttt{Backward-EPE} subroutine is a free parameter that can be chosen to minimize the overall sample complexity. For example,
\begin{equation}\label{eqBidirCompCorSetting_eps}
\epsilon = \Theta \left( \sqrt{ \frac{ H \bar{d} \epsilon_{abs} }{ S \min_{i,j \in \mathcal{S}: Q(i,j) > 0} Q(i,j) } } \right)  \Rightarrow  K_{BD} = O \left(  \sqrt{    \frac{  S H \bar{d}   }{  \epsilon_{abs} \min_{i,j \in \mathcal{S}: Q(i,j) > 0} Q(i,j)  }  }  \log S \right) ,
\end{equation}
where for simplicity we wrote $\log(1/\epsilon_{abs}) / \sqrt{\epsilon_{abs}}$ as simply $1/ \sqrt{\epsilon_{abs}}$ (note this choice of $\epsilon$ minimizes \eqref{eqBidirCompCorSetting} if we also ignore the $\log(1/\epsilon_{abs})$ term in that expression). To interpret \eqref{eqBidirCompCorSetting_eps}, we consider a specific choice of $\epsilon_{abs}$. To motivate this, we first observe that in the setting of Corollary \ref{corHnonzeros},
\begin{equation}
\E v = (1-\alpha) \sum_{t=0} \alpha^t Q^t \times \E C = (1-\alpha) \sum_{t=0} \alpha^t Q^t \times \frac{H}{S} 1_{S \times 1} = \frac{H}{S} 1_{S \times 1} ,
\end{equation}
i.e.\ the ``typical'' value is $H/S$. It is thus sensible to choose $\epsilon_{abs} = \Theta(H/S)$, so that we obtain a relative guarantee for above-typical values and settle for the absolute guarantee for below-typical values. Substituting into \eqref{eqBidirCompCorSetting_eps}, we conclude that \texttt{Bidirectional-EPE} requires
\begin{equation}\label{eqBidirCompCorSetting_fin}
K_{BD} = O \left(  \sqrt{ \frac{\bar{d}   }{  \min_{i,j \in \mathcal{S}: Q(i,j) > 0} Q(i,j) } } S \log S\right) 
\end{equation}
samples in order to guarantee \eqref{eqAccuracyGuarantee2} in the setting of Corollary \ref{corHnonzeros}.

It is interesting to compare \texttt{Bidirectional-EPE} to a plug-in estimator that lends itself to the same accuracy guarantee. For this plug-in estimator, we simply estimate $v$ as $(1-\alpha) \sum_{t=0}^{\infty} \alpha^t \tilde{Q}^t C$, where $\tilde{Q}(s,\cdot) = \frac{1}{n} \sum_{i=1}^n 1 ( Y_{s,i} = \cdot )$ with $Y_{s,i} \sim Q(s,\cdot)$ as in Lemma \ref{lemInvariantUnknownQ}. Then by the same argument following \eqref{eqRelErrPassToTilde} in the proof of Theorem \ref{thmAccuracy2}, the plug-in estimate will satisfy the guarantee \eqref{eqAccuracyGuarantee2} whenever $n \geq n_B^*(\epsilon_{rel},\epsilon_{abs},\delta)$. Consequently, the sample complexity of the plug-in estimator is, under the assumptions leading to \eqref{eqBidirCompCorSetting_fin},
\begin{equation}\label{eqPlugInComp}
S n_B^*(\epsilon_{rel},\epsilon_{abs},\delta) = O \left(   \frac{ S \log S }{ \min_{i,j \in \mathcal{S}: Q(i,j) > 0} Q(i,j) }  \right) .
\end{equation}
Comparing \eqref{eqBidirCompCorSetting_fin} and \eqref{eqPlugInComp}, we see \texttt{Bidirectional-EPE} is more efficient than the plug-in whenever $\bar{d} \leq 1 / \min_{i,j \in \mathcal{S}: Q(i,j) > 0} Q(i,j)$. To interpret this inequality, first suppose the supergraph is precisely the graph induced by $Q$, i.e.\ $A(s,s') = 0 \Leftrightarrow Q(s,s') = 0$. Then for any $s \in \mathcal{S}$, we have
\begin{equation}
\sum_{s' \in \mathcal{S}} A(s,s') =\sum_{s' \in \mathcal{S} : Q(s,s') > 0} \frac{Q(s,s')}{Q(s,s')} \leq \frac{ \sum_{s' \in \mathcal{S} : Q(s,s') > 0} Q(s,s') }{ \min_{i,j \in \mathcal{S} : Q(i,j) > 0} Q(i,j) } = \frac{1}{ \min_{i,j \in \mathcal{S} : Q(i,j) > 0} Q(i,j) } ,
\end{equation}
so $\bar{d} \leq 1 / \min_{i,j \in \mathcal{S}: Q(i,j) > 0} Q(i,j)$ indeed holds. More generally, this suggests that the complexity of \texttt{Bidirectional-EPE} is order-wise similar to that of the plug-in method whenever degrees in the supergraph and induced graph are order-wise similar. If most positive transition probabilities dominate the minimum probability, then $\bar{d} = o ( 1 / \min_{i,j \in \mathcal{S}: Q(i,j) > 0} Q(i,j) )$, in which case \texttt{Bidirectional-EPE} is strictly better asymptotically.

Generally, it is difficult to compare the sample complexity \eqref{eqBidirCompCorSetting_fin} to the bounds derived in Section \ref{secBackward} analytically, owing to the different error guarantees. Thus, we present an empirical comparison in Figure \ref{figBidirectionalNumerical}. Here we simulate all three algorithms using the case $\bar{d} \|C\|_1 / \|C\|_{\infty} \approx \Theta(1)$ from Figure \ref{figBackwardNumerical}. We choose algorithmic parameters so that all algorithms maintain average relative error $\frac{1}{S} \sum_{s=1}^S \frac{ | \hat{v}(s) - v(s) | }{ v(s) } \approx 25\%$ across $S$ (right). For these parameters, the sample complexities of the forward approach and \texttt{Backward-EPE} scale like $S^2$ (obtained via linear fits on a log-log scale, left). In contrast, the complexity of \texttt{Bidirectional-EPE} scales like $S^{1.7}$, suggesting a subquadratic sample complexity. Thus, as discussed above, \texttt{Bidirectional-EPE} appears more sample-efficient if one aims to maintain constant relative error.

\begin{figure}
\centering
\includegraphics[height=2.25in]{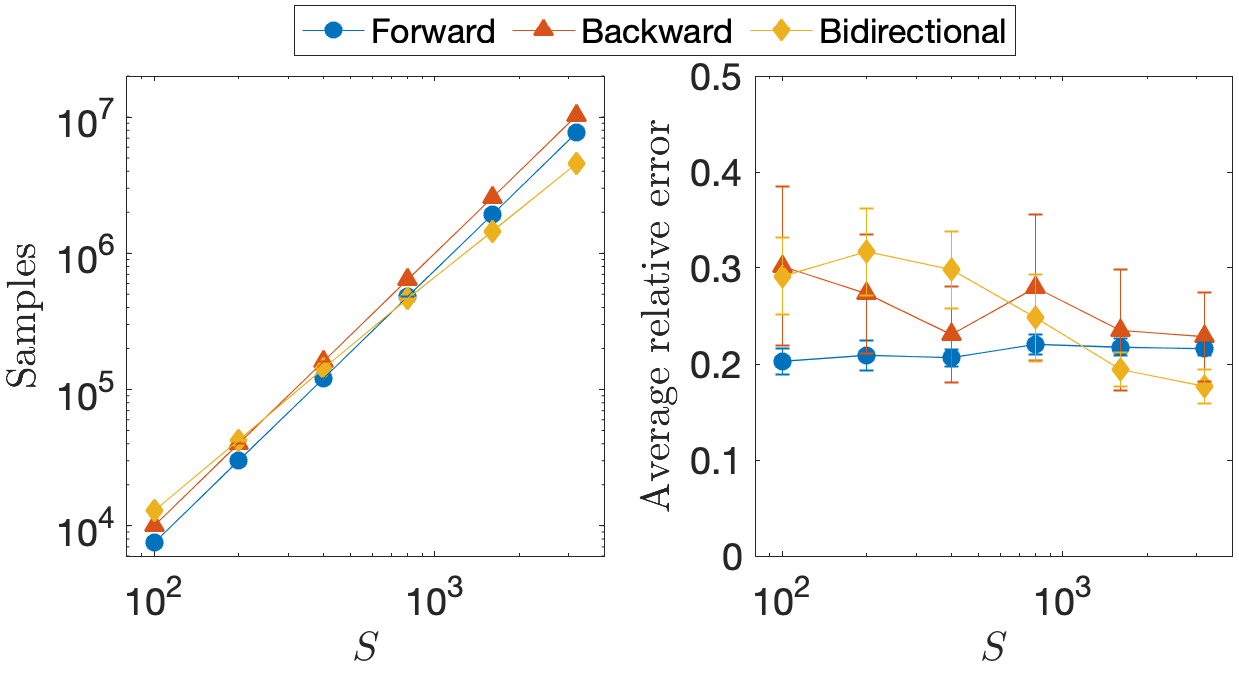}
\caption{Numerical illustration of \texttt{Bidirectional-EPE}} \label{figBidirectionalNumerical}
\end{figure}

\section{Future directions} \label{secFuture}

In this work, we adapted the PageRank estimators from \cite{andersen2008local,lofgren2016personalized} to EPE. However, the PageRank literature contains many other algorithms either explicitly or conceptually related to these estimators, see e.g.\ \cite{jeh2003scaling, andersen2006local,berkhin2006bookmark,wang2017fora,vial2019role,vial2019structural}. Each of these algorithms rely on analyses similar to that of \cite{andersen2008local,lofgren2016personalized}, which we extended to the EPE setting in this work. Thus, while we have focused on two specific algorithms in this paper, our analysis should be viewed as an example of how to extend a larger family of algorithms to EPE.

Another extension of this work is devising backward and bidirectional exploration-based EPE algorithms for the finite horizon cumulative cost value function
\begin{equation}
v(s) = \E \left[ \sum_{t=0}^T c ( Z_t ) \middle| Z_0 = s \right] = \sum_{t=0}^T Q^t(s,\cdot) c .
\end{equation}
Here one aims to estimate multi-step transition distributions of the form $Q^t(s,\cdot)$. Though our algorithms do not immediately apply, relevant analogues of \texttt{Approx-Contributions} exist in the case where $Q$ is known. In particular, \cite{banerjee2015fast} provides an algorithm to estimate $Q^t(s,\cdot)$ when $Q$ is known. The algorithm is analogous to \texttt{Approx-Contributions} in that it explores backward from high-cost states. Moreover, \cite{banerjee2015fast} provides a bidirectional variant. Both of these algorithms could be adapted to EPE using our approach; this would yield analogues of \texttt{Backward-EPE} and \texttt{Bidirectional-EPE} for finite horizons.

As mentioned in Section \ref{secBackward}, an alternative of \texttt{Backward-EPE} would take independent samples from $Q(s,\cdot)$ for each $s \in N_{in}(s_k)$ and at each iteration $k$, rather than only sampling from $Q(s,\cdot)$ when we first encounter $s$ as in \texttt{Backward-EPE}. This alternative scheme is formally defined in Appendix \ref{appResampling}. An interesting property is that, while the invariants of Lemma \ref{lemInvariantUnknownQ} fail, a related error process is a zero-mean martingale (see Appendix \ref{appResampling}), and thus the ultimate estimate is unbiased. Analytically, this is an advantage over \texttt{Backward-EPE}, where the $\overline{Q}$- and $\underline{Q}$-invariants hold but the corresponding value functions $\overline{v}$ and $\underline{v}$ are biased estimates of $v$. The disadvantage of this alternative approach is that it may sample many times from each row of $Q$, and thus the overall sample complexity may exceed that of the forward approach. Put differently, \texttt{Backward-EPE} is conservative in the sense that it performs no worse than the forward approach in the worst case (see Section \ref{secBackward}), but it sacrifices desirable properties that could perhaps improve performance in other cases. A useful avenue for future work would thus be to investigate this tradeoff.

\bibliography{references}

\begin{thebibliography}{16}
\providecommand{\natexlab}[1]{#1}
\providecommand{\url}[1]{\texttt{#1}}
\expandafter\ifx\csname urlstyle\endcsname\relax
  \providecommand{\doi}[1]{doi: #1}\else
  \providecommand{\doi}{doi: \begingroup \urlstyle{rm}\Url}\fi

\bibitem[Andersen et~al.(2006)Andersen, Chung, and Lang]{andersen2006local}
Reid Andersen, Fan Chung, and Kevin Lang.
\newblock Local graph partitioning using {PageRank} vectors.
\newblock In \emph{2006 IEEE Symposium on Foundations of Computer Science},
  pages 475--486. IEEE, 2006.

\bibitem[Andersen et~al.(2008)Andersen, Borgs, Chayes, Hopcroft, Mirrokni, and
  Teng]{andersen2008local}
Reid Andersen, Christian Borgs, Jennifer Chayes, John Hopcroft, Vahab Mirrokni,
  and Shang-Hua Teng.
\newblock Local computation of {PageRank} contributions.
\newblock \emph{Internet Mathematics}, 5\penalty0 (1-2):\penalty0 23--45, 2008.

\bibitem[Banerjee and Lofgren(2015)]{banerjee2015fast}
Siddhartha Banerjee and Peter Lofgren.
\newblock Fast bidirectional probability estimation in {Markov} models.
\newblock In \emph{Advances in Neural Information Processing Systems}, pages
  1423--1431, 2015.

\bibitem[Berkhin(2006)]{berkhin2006bookmark}
Pavel Berkhin.
\newblock Bookmark-coloring algorithm for personalized pagerank computing.
\newblock \emph{Internet Mathematics}, 3\penalty0 (1):\penalty0 41--62, 2006.

\bibitem[Dubhashi and Panconesi(2009)]{dubhashi2009concentration}
Devdatt~P Dubhashi and Alessandro Panconesi.
\newblock \emph{Concentration of measure for the analysis of randomized
  algorithms}.
\newblock Cambridge University Press, 2009.

\bibitem[Edwards et~al.(2018)Edwards, Downs, and Davidson]{edwards2018forward}
Ashley~D Edwards, Laura Downs, and James~C Davidson.
\newblock Forward-backward reinforcement learning.
\newblock \emph{arXiv preprint arXiv:1803.10227}, 2018.

\bibitem[Florensa et~al.(2017)Florensa, Held, Wulfmeier, Zhang, and
  Abbeel]{florensa2017reverse}
Carlos Florensa, David Held, Markus Wulfmeier, Michael Zhang, and Pieter
  Abbeel.
\newblock Reverse curriculum generation for reinforcement learning.
\newblock In \emph{Conference on Robot Learning}, pages 482--495, 2017.

\bibitem[Goyal et~al.(2018)Goyal, Brakel, Fedus, Singhal, Lillicrap, Levine,
  Larochelle, and Bengio]{goyal2018recall}
Anirudh Goyal, Philemon Brakel, William Fedus, Soumye Singhal, Timothy
  Lillicrap, Sergey Levine, Hugo Larochelle, and Yoshua Bengio.
\newblock Recall traces: Backtracking models for efficient reinforcement
  learning.
\newblock \emph{arXiv preprint arXiv:1804.00379}, 2018.

\bibitem[Haskell et~al.(2016)Haskell, Jain, and Kalathil]{haskell2016empirical}
William~B Haskell, Rahul Jain, and Dileep Kalathil.
\newblock Empirical dynamic programming.
\newblock \emph{Mathematics of Operations Research}, 41\penalty0 (2):\penalty0
  402--429, 2016.

\bibitem[Jeh and Widom(2003)]{jeh2003scaling}
Glen Jeh and Jennifer Widom.
\newblock Scaling personalized web search.
\newblock In \emph{Proceedings of the 12th International Conference on World
  Wide Web}, pages 271--279. ACM, 2003.

\bibitem[Lofgren and Goel(2013)]{lofgren2013personalized}
Peter Lofgren and Ashish Goel.
\newblock Personalized {PageRank} to a target node.
\newblock \emph{arXiv preprint arXiv:1304.4658}, 2013.

\bibitem[Lofgren et~al.(2016)Lofgren, Banerjee, and
  Goel]{lofgren2016personalized}
Peter Lofgren, Siddhartha Banerjee, and Ashish Goel.
\newblock Personalized {PageRank} estimation and search: {A} bidirectional
  approach.
\newblock In \emph{Proceedings of the Ninth ACM International Conference on Web
  Search and Data Mining}, pages 163--172. ACM, 2016.

\bibitem[Page et~al.(1999)Page, Brin, Motwani, and Winograd]{page1999pagerank}
Lawrence Page, Sergey Brin, Rajeev Motwani, and Terry Winograd.
\newblock The {PageRank} citation ranking: {B}ringing order to the web.
\newblock Technical report, Stanford InfoLab, 1999.

\bibitem[Vial and Subramanian(2019{\natexlab{a}})]{vial2019role}
Daniel Vial and Vijay Subramanian.
\newblock On the role of clustering in {Personalized PageRank} estimation.
\newblock \emph{ACM Transactions on Modeling and Performance Evaluation of
  Computing Systems (TOMPECS)}, 4\penalty0 (4):\penalty0 21,
  2019{\natexlab{a}}.

\bibitem[Vial and Subramanian(2019{\natexlab{b}})]{vial2019structural}
Daniel Vial and Vijay Subramanian.
\newblock A structural result for {Personalized PageRank} and its algorithmic
  consequences.
\newblock \emph{Proceedings of the ACM on Measurement and Analysis of Computing
  Systems}, 3\penalty0 (2):\penalty0 25, 2019{\natexlab{b}}.

\bibitem[Wang et~al.(2017)Wang, Yang, Xiao, Wei, and Yang]{wang2017fora}
Sibo Wang, Renchi Yang, Xiaokui Xiao, Zhewei Wei, and Yin Yang.
\newblock {FORA}: {S}imple and effective approximate single-source
  {Personalized PageRank}.
\newblock In \emph{Proceedings of the 23rd ACM SIGKDD International Conference
  on Knowledge Discovery and Data Mining}, pages 505--514. ACM, 2017.

\end{thebibliography}

\newpage \appendix

\section{Existing PageRank estimators} \label{appComparison}

The \texttt{Approx-Contributions} algorithm is defined in Algorithm \ref{algApproxCont}. As mentioned in Section \ref{secBackward}, \texttt{Backward-EPE} naturally generalizes this algorithm by initializing the residual vector as $c$ (instead of restricting to the case $e_{s_*}$) and by replacing $Q(s,s_k)$ with an empirical estimate in the iterative update. Also as mentioned in Section \ref{secBackward}, Lemma \ref{lemInvariantUnknownQ} is an analogue of Lemma 1 in \cite{andersen2008local}, which states
\begin{equation}\label{eqInvariantKnownQ}
\hat{v}_k(s) + \mu_s r_k = v(s)\ \forall\ k \in \{0,1,\ldots\} , s \in \mathcal{S} ,
\end{equation}
where $\mu_s = (1-\alpha) e_s^{\tr} ( I - \alpha Q)^{-1}$. The proof of Lemma \ref{lemInvariantUnknownQ} resembles that of Lemma 1 in \cite{andersen2008local} but addresses new technical issues that arise in the case of unknown $Q$; see Remark \ref{remInvariantProofKnownQ}. Similarly, Theorem \ref{thmEncountered} resembles existing \texttt{Approx-Contributions} computational complexity analyses while addressing new technical issues; see Remark \ref{remCompCompKnownQ}.

\begin{algorithm}
\caption{\texttt{Approx-Contributions} (from \cite{andersen2008local})} \label{algApproxCont}
\KwIn{Transition matrix $Q$; cost vector $e_{s^*}$; discount factor $\alpha$; termination parameter $\epsilon$}

$k = 0$, $\hat{v}_k = 0_{S \times 1}$, $r_k = e_{s^*}$

\While{$\|r_k\|_{\infty} > \epsilon$}{

	$k \leftarrow k+1$, $s_k \sim \argmax_{s \in \mathcal{S}} r_{k-1}(s)$ uniformly

	\For{$s \in \mathcal{S}$}{

		\lIf{$s = s_k$}{$\hat{v}_k(s) = \hat{v}_{k-1}(s) + (1-\alpha) r_{k-1}(s)$, $r_k(s) = \alpha Q(s,s_k) r_{k-1}(s_k)$} 

		\lElse{$\hat{v}_k(s) = \hat{v}_{k-1}(s)$, $r_k(s) = r_{k-1}(s) + \alpha Q(s,s_k) r_{k-1}(s_k)$}

	}
} 
\KwOut{Estimate $\hat{v}_{k}$ of $v = (1-\alpha) \sum_{t=0}^{\infty} \alpha^t Q^t e_{s_*}$}
\end{algorithm}

As a historical note, the restriction to $c = e_{s_*}$ arose because the original intent of \texttt{Approx- Contributions} was to estimate the $s^*$-th column of $(1-\alpha) (I-\alpha Q)^{-1}$. The column sums of this matrix are called PageRank scores and serve as a network centrality measure in the network science literature. Estimating the $s^*$-th column allows one to approximate how much each node in the network contributes to $s^*$'s PageRank score (hence the name \texttt{Approx-Contributions}).

As mentioned in Section \ref{secBidirectional}, \texttt{Bidirectional-EPE} adapts \texttt{Bidirectional-PPR} from \cite{lofgren2016personalized} to EPE. The latter algorithm first runs \texttt{Approx-Contributions}, then estimates the unknown residual $\mu_s r_{k_*} = \E_{Z_s \sim \mu_s} r_{k_*} ( Z_s )$ in \eqref{eqInvariantKnownQ} with $\textrm{Geometric}(1-\alpha)$-length trajectories on $Q$. Thus, \texttt{Bidirectional-EPE} adapts this by replacing \texttt{Approx-Contributions} with its analogue \texttt{Backward-EPE}, and by sampling trajectories on $\underline{Q}$ instead of $Q$ (since \eqref{eqInvariantKnownQ} fails for \texttt{Backward-EPE} but the $\overline{Q}$-invariant holds).

\section{Proof of Lemma \ref{lemInvariantUnknownQ}} \label{proofLemInvariantUnknownQ}

We state and prove a slightly more general result.

\begin{lemma} \label{lemInvariantUnknownQgeneral}
Let $\mathcal{P} = \{ B \in \R_+^{S \times S} : \sum_{s'=1}^S B(s,s') = 1\ \forall\ s \in \mathcal{S} \}$ denote the set of $S \times S$ row stochastic matrices, and let $P$ be a random matrix taking values in $\mathcal{P}$ and satisfying the following:
\begin{equation}
P(s,\cdot) = \hat{Q}_{k_*}(s,\cdot)\ \forall\ s \in U_{k_*}\ a.s., \quad A(s,s') = 0\Rightarrow P(s,s') = 0\ \forall\ s,s' \in \mathcal{S}\ a.s.
\end{equation}
For each $s \in \mathcal{S}$, let $\nu_s = (1-\alpha) e_s^{\tr} ( I - \alpha P )^{-1}$ and $u(s) = \nu_s c$. Then the random vectors $\{ \hat{v}_k , r_k \}_{k=0}^{k_*}$ generated by Algorithm \ref{algBackwardEPE} satisfy the following:
\begin{equation}\label{eqInvariantUnknownQ}
\hat{v}_k(s) + \nu_s r_k = u(s)\ \forall\ k \in \{0,\ldots,k_*\}, s \in \mathcal{S}\ a.s.
\end{equation}
\end{lemma}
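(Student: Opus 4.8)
The plan is to prove \eqref{eqInvariantUnknownQ} by induction on $k$, following the structure of Lemma~1 in \cite{andersen2008local} but carefully tracking which rows of $P$ are already ``locked in'' to $\hat{Q}_{k_*}$. For the base case $k=0$, recall $\hat{v}_0 = 0$ and $r_0 = c$, so the claim reads $\nu_s c = u(s)$, which is exactly the definition of $u(s)$. For the inductive step, I assume $\hat{v}_{k-1}(s) + \nu_s r_{k-1} = u(s)$ for all $s$ and some $k \in \{1,\dots,k_*\}$, and I want to deduce the same identity at $k$. Writing $\Delta_k = (\hat{v}_k - \hat{v}_{k-1}) + \nu_s^{\tr}\!\!\cdot (r_k - r_{k-1})$ componentwise-summed appropriately, it suffices to show that the change in $\hat{v}_k(s)$ plus $\nu_s(r_k - r_{k-1})$ is zero for every $s$. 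From the algorithm's update rules, $r_k - r_{k-1} = \alpha \hat{Q}_k(\cdot,s_k) r_{k-1}(s_k) - r_{k-1}(s_k) e_{s_k}$ (the mass $r_{k-1}(s_k)$ is removed from coordinate $s_k$ and redistributed as $\alpha \hat{Q}_k(s,s_k) r_{k-1}(s_k)$ into coordinate $s$ for each $s$), and $\hat{v}_k - \hat{v}_{k-1} = (1-\alpha) r_{k-1}(s_k) e_{s_k}$. So the net change to $\hat{v}_k(s) + \nu_s r_k$ is
\begin{equation}
r_{k-1}(s_k)\Bigl[ (1-\alpha)\,\nu_s(s_k) + \alpha \sum_{s'\in\mathcal{S}} \nu_s(s') \hat{Q}_k(s',s_k) - \nu_s(s_k) \Bigr] .
\end{equation}
The bracketed quantity equals $-(1-\alpha)\bigl[ e_{s_k}^{\tr}\cdot\text{(something)} \bigr]$; more precisely, using $\nu_s = (1-\alpha) e_s^{\tr}(I - \alpha P)^{-1}$, i.e.\ $\nu_s(I - \alpha P) = (1-\alpha) e_s^{\tr}$, we get $\nu_s = (1-\alpha) e_s^{\tr} + \alpha \nu_s P$, so $\nu_s(s_k) = (1-\alpha)\mathbf{1}(s=s_k) + \alpha (\nu_s P)(s_k) = (1-\alpha)\mathbf{1}(s=s_k) + \alpha \sum_{s'} \nu_s(s') P(s',s_k)$. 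Substituting, the bracket becomes $(1-\alpha)\mathbf{1}(s=s_k)(1-\alpha) \cdot(\dots)$ — wait, let me organize: the bracket is $\alpha\sum_{s'}\nu_s(s')\hat{Q}_k(s',s_k) - \alpha\sum_{s'}\nu_s(s')P(s',s_k) - (1-\alpha)^2\mathbf{1}(s=s_k) + \text{lower-order}$; the point is it collapses to zero \emph{provided} $\hat{Q}_k(s',s_k) = P(s',s_k)$ for every $s'$ with $\nu_s(s') \neq 0$, plus handling of the $s=s_k$ term. This is where the hypotheses on $P$ enter.

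The main obstacle — and the genuinely new content relative to \cite{andersen2008local} — is verifying that $\hat{Q}_k(s', s_k) = P(s', s_k)$ for all $s'$ that actually contribute to the sum. The subtlety is that $\hat{Q}_k$ is the estimate \emph{at iteration $k$}, not the final estimate $\hat{Q}_{k_*}$, whereas $P$ agrees with $\hat{Q}_{k_*}$ on $U_{k_*}$. I would argue as follows: any $s'$ with $P(s', s_k) \neq 0$ must have $A(s', s_k) = 1$ (by the absolute-continuity hypothesis on $P$), hence $s' \in N_{in}(s_k) \subseteq U_k$; so $s'$ was encountered at or before iteration $k$, meaning $\hat{Q}_k(s', \cdot)$ was already finalized (the algorithm never re-estimates an encountered row) and equals $\hat{Q}_{k_*}(s', \cdot)$, which in turn equals $P(s', \cdot)$ since $s' \in U_k \subseteq U_{k_*}$. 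For $s'$ with $P(s',s_k) = 0$ the term vanishes regardless, so we need the matching identity $\hat{Q}_k(s', s_k) = 0$ there too — but $\hat{Q}_k(s', s_k) = 0$ whenever $A(s', s_k) = 0$ since we only ever put mass on rows we sampled and samples respect $Q$'s support, which is contained in $A$'s support; and if $A(s',s_k)=1$ but $P(s',s_k)=0$ then again $s' \in U_k$ and $\hat{Q}_k(s',s_k) = \hat{Q}_{k_*}(s',s_k) = P(s',s_k) = 0$. Thus in every case $\alpha\sum_{s'}\nu_s(s')(\hat{Q}_k(s',s_k) - P(s',s_k)) = 0$, so the bracket reduces to $(1-\alpha)(\nu_s P)(s_k) - (1-\alpha)(\nu_s P)(s_k) = 0$ after also using the $\nu_s = (1-\alpha)e_s^{\tr} + \alpha\nu_s P$ identity to rewrite $\nu_s(s_k)$; hence the net change is zero and the induction closes.

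Finally I would note the two instances required in Lemma~\ref{lemInvariantUnknownQ}: taking $P = \overline{Q}$ works because $\overline{Q}$ agrees with $\hat{Q}_{k_*}$ on $U_{k_*}$ by construction and its off-$U_{k_*}$ rows are $\tilde Q$, which has support in $A$ (being empirical from $Q(s,\cdot)$), so both hypotheses hold a.s.; taking $P = \underline{Q}$ works identically, with the off-$U_{k_*}$ rows being genuine rows $Q(s,\cdot)$ whose support is contained in $A$'s support by \eqref{eqAbsContCond}. Restricting \eqref{eqInvariantUnknownQ} to these two choices and reading off $\nu_s = \overline{\mu_s}, u = \overline{v}$ resp.\ $\nu_s = \underline{\mu_s}, u = \underline{v}$ gives \eqref{eqInvariantOverUnder}, completing the proof of Lemma~\ref{lemInvariantUnknownQ}. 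One routine point deferred to the writeup: $(I - \alpha P)^{-1}$ exists since $\alpha \in (0,1)$ and $P$ is row-stochastic (spectral radius $\le 1$), so $\nu_s$ is well-defined and, being $(1-\alpha)\sum_{t\ge0}\alpha^t e_s^{\tr} P^t$, is a probability distribution over $\mathcal{S}$.
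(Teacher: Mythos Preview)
Your proposal is correct and follows the same approach as the paper: induction on $k$, reducing the inductive step to the claim $\hat{Q}_k(s',s_k)=P(s',s_k)$ for all $s'\in\mathcal{S}$, and verifying that claim via the two hypotheses on $P$ together with $N_{in}(s_k)\subseteq U_k\subseteq U_{k_*}$ and the fact that once a row is estimated it never changes. One slip to fix in the writeup: in your displayed bracket the first term should be $(1-\alpha)\,1(s=s_k)$, not $(1-\alpha)\,\nu_s(s_k)$, since that contribution comes from $\hat{v}_k(s)-\hat{v}_{k-1}(s)$; with this correction, substituting your identity $\nu_s(s_k)=(1-\alpha)1(s=s_k)+\alpha(\nu_s P)(s_k)$ makes the bracket collapse to $\alpha\sum_{s'}\nu_s(s')\bigl(\hat{Q}_k(s',s_k)-P(s',s_k)\bigr)$, which then vanishes exactly as you argue. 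The paper organizes the case split for $\hat{Q}_k(s',s_k)=P(s',s_k)$ by whether $s'\in U_k$ (so it never needs your a.s.\ argument that empirical samples from $Q(s',\cdot)$ cannot land outside the support of $A(s',\cdot)$), whereas you split by whether $A(s',s_k)=1$; both partitions are valid and yield the same conclusion.
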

\begin{proof}
Fix $s \in \mathcal{S}$. We prove \eqref{eqInvariantUnknownQ} by induction on $k$. For $k = 0$, \eqref{eqInvariantUnknownQ} is immediate, since $\hat{v}_0 = 0_{S \times 1}$ and $r_0 = c$ in Algorithm \ref{algBackwardEPE}. For $k \in [ k_* ]$, the iterative update of Algorithm \ref{algBackwardEPE} implies ($a.s.$)
\begin{align} 
\hat{v}_k(s) + \nu_s r_k  & = \hat{v}_{k-1}(s) + (1-\alpha) r_{k-1}(s_k) 1 ( s = s_k ) \\
& \quad\quad + \sum_{s'=1}^S \nu_s(s') ( r_{k-1}(s') 1 ( s' \neq s_k ) + \alpha \hat{Q}_k(s',s_k) r_{k-1}(s_k) )   \\
&  = \hat{v}_{k-1}(s) + \nu_s r_{k-1} + r_{k-1}(s_k) ( - \nu_s(s_k) + (1-\alpha) 1 ( s = s_k ) + \alpha \nu_s \hat{Q}_k(\cdot,s_k) ) , \quad \label{eqInvariantIndStep}
\end{align}
where for the second equality we added and subtracted $\mu_s(s_k) r_{k-1}(s_k)$ and rearranged the expression. 
Now since $\hat{v}_{k-1}(s) + \nu_s r_{k-1} = u(s)\ a.s.$ by the inductive hypothesis, and since by definition
\begin{align}\label{eqInvariantUnknownQindHypProof}
\nu_s(s_k) - (1-\alpha) 1 ( s = s_k ) & = (1-\alpha) \sum_{t=1}^{\infty} \alpha^t P^t(s,s_k) \\
& = \alpha (1-\alpha)  \sum_{t=0}^{\infty} \alpha^t P^t(s,\cdot) P (\cdot,s_k) = \alpha \nu_s P(\cdot,s_k) ,
\end{align}
it suffices to show $\hat{Q}_k(s',s_k) = P(s',s_k)\ \forall\ s' \in \mathcal{S}\ a.s.$ (since then the term in parentheses in \eqref{eqInvariantIndStep} will be zero). Towards this end, we fix $s' \in \mathcal{S}$ and consider two cases:
\begin{itemize}
\item If $s' \in U_k$, Algorithm \ref{algBackwardEPE} implies $\hat{Q}_k(s',s_k) = \hat{Q}_{k_*}(s',s_k)$ (once we estimate $Q(s',\cdot)$, our estimate remains unchanged). Moreover,  $U_k \subset U_{k_*}$ in Algorithm \ref{algBackwardEPE} (the encountered set only grows), so $s' \in U_{k_*}$, and thus $P(s',s_k) = \hat{Q}_{k_*}(s',s_k)\ a.s.$ by assumption on $P$. Taken together, $\hat{Q}_k(s',s_k) = P(s',s_k)\ a.s.$
\item If $s' \notin U_k$, Algorithm \ref{algBackwardEPE} implies $Q_k(s',s_k) = 0$ (before encountering $s'$, our estimate of $Q(s',\cdot)$ is $0_{1 \times S}$). On the other hand, $N_{in}(s_k) \subset U_k$, so $s' \notin N_{in}(s_k)$ and $A(s',s_k) = 0$ by definition of $N_{in}(s_k)$. Hence, by assumption on $P$, we have $P(s',s_k) = 0\ a.s.$ as well.
\end{itemize}
Thus, $\hat{Q}_k(s',s_k) = P(s',s_k)\ a.s.$ in both cases, completing the proof.
\end{proof}

\begin{remark} \label{remInvariantProofKnownQ}
The \texttt{Approx-Contributions} invariant \eqref{eqInvariantKnownQ} is proven in a similar (but simpler) manner: the base of induction is trivial ($\hat{v}_0(s) + \mu_s r_0 = 0 + \mu_s c = v(s)$); assuming \eqref{eqInvariantKnownQ} holds for $k-1$, one proves it holds for $k$ using the approach of \eqref{eqInvariantIndStep} and \eqref{eqInvariantUnknownQindHypProof} (replacing $\nu_s$ with $\mu_s$ and both $P$ and $\hat{Q}_k$ with $Q$). The crucial idea of \texttt{Backward-EPE} and our analysis is that such an invariant also holds for \texttt{Backward-EPE}, as formalized by preceding lemma. This relies fundamentally on the fact that once $Q(s,\cdot)$ is estimated, the estimate is retained for the duration of the algorithm; otherwise, the logic of the first bullet in the proof above fails. This algorithmic subtlety allows us to prove analogues of existing results for \texttt{Approx-Contributions}; see Remark \ref{remCompCompKnownQ}. 
\end{remark}

\section{Proof of Theorem \ref{thmAccuracy}} \label{proofThmAccuracy}

Fix $s \in \mathcal{S}$ and observe that the $\overline{Q}$-invariant \eqref{eqInvariantOverUnder}, the termination criteria $\|r_k\|_{\infty} \leq \epsilon$ of \texttt{Backward- EPE}, and the fact that $\sum_{s=1}^S \overline{\mu_s}(s) = 1$ by definition together imply 
\begin{equation}
| \hat{v}_{k_*}(s) - \overline{v}(s) | = \overline{\mu_s} r_{k_*} \leq \epsilon .
\end{equation}
Since this inequality holds uniformly in $s$, we can then write
\begin{equation}\label{eqAccProof1first}
\P ( \| \hat{v}_{k_*} - v \|_{\infty} \geq 2\epsilon ) \leq \P ( \| \overline{v} - v \|_{\infty} \geq \epsilon ), 
\end{equation}
so we aim to show the right side is bounded by $\delta$ whenever $n \geq n^*(\epsilon,\delta)$. Towards this end, we begin by deriving a pointwise bound for $\| \overline{v} - v \|_{\infty}$. First, fix $T \in \N$ and observe
\begin{equation}\label{eqAccProof1cvx}
\| \overline{v} - v \|_{\infty} \leq (1-\alpha) \sum_{t=1}^{\infty} \alpha^t \| ( \overline{Q}^t - Q^t ) c \|_{\infty} \leq (1-\alpha) \sum_{t=1}^{T-1} \alpha^t \| ( \overline{Q}^t - Q^t  ) c \|_{\infty} + 2 \| c \|_{\infty} \alpha^T ,
\end{equation}
where the first inequality is convexity and the second holds since by row stochasticity of $\overline{Q}$ and $Q$,
\begin{align}\label{eqVhatMinVtail}
(1-\alpha)  \sum_{t=T}^{\infty} \alpha^t \| ( \overline{Q}^t - Q^t  ) c \|_{\infty} & \leq (1-\alpha)  \sum_{t=T}^{\infty} \alpha^t ( \| \overline{Q}^t c \|_{\infty} + \| Q^t c \|_{\infty} ) \\
& \leq 2 \| c \|_{\infty} (1-\alpha)  \sum_{t=T}^{\infty} \alpha^t =2 \| c \|_{\infty} \alpha^T .
\end{align}
Now for large enough $T$, the bound in \eqref{eqVhatMinVtail} falls below $\epsilon/2$; in particular,
\begin{equation}\label{eqChoiceOfT}
T \geq \frac{ \log ( 4 \| c \|_{\infty} / \epsilon ) }{1-\alpha} \quad \Rightarrow \quad 2 \| c \|_{\infty} \alpha^T \leq 2 \| c \|_{\infty} e^{-(1-\alpha)T} \leq \frac{\epsilon}{2} .
\end{equation}
Furthermore, for the $t$-th summand in \eqref{eqAccProof1cvx}, we can use the triangle inequality to write
\begin{equation}\label{eqSummandTriangle}
\| ( \overline{Q}^t  - Q^t ) c \|_{\infty} \leq \| \overline{Q} ( \overline{Q}^{t-1} - Q^{t-1} ) c \|_{\infty} + \| ( \overline{Q} - Q ) Q^{t-1} c \|_{\infty} .
\end{equation}
For the first summand in \eqref{eqSummandTriangle}, we have by convexity and row stochasticity,
\begin{align}\label{eqSummandTriangleTerm1}
 \| \overline{Q} ( \overline{Q}^{t-1} - Q^{t-1} ) c \|_{\infty} & \leq \max_{s \in \mathcal{S}} \sum_{s'=1}^S \overline{Q}(s,s') | ( \overline{Q}^{t-1}(s',\cdot) - Q^{t-1}(s',\cdot) ) c | \\
& \leq \| ( \overline{Q}^{t-1}  - Q^{t-1} ) c \|_{\infty} .
\end{align}
We can then combine the previous two inequalities and iterate to obtain
\begin{align}
\| ( \overline{Q}^t  - Q^t ) c \|_{\infty} \leq \sum_{\tau=1}^t \| ( \overline{Q} - Q ) Q^{\tau-1} c \|_{\infty} & \leq t \max_{\tau \in [t]} \| ( \overline{Q} - Q ) Q^{\tau-1} c \|_{\infty} \\
& \leq t \max_{\tau \in [T]} \| ( \overline{Q} - Q ) Q^{\tau-1} c \|_{\infty} .
\end{align}
Since this holds uniformly in $t$, we have
\begin{align}\label{eqMaxTandGeoSeries}
(1-\alpha) \sum_{t=1}^{T-1} \alpha^t \| ( \overline{Q}^t  - Q^t ) c \|_{\infty} & \leq \max_{\tau \in [T]} \| ( \overline{Q} - Q ) Q^{\tau-1} c \|_{\infty} (1-\alpha) \sum_{t=1}^{\infty} \alpha^t t \\
& = \max_{\tau \in [T]} \| ( \overline{Q} - Q ) Q^{\tau-1} c \|_{\infty} \frac{\alpha}{1-\alpha} .
\end{align}
To summarize, for $T$ as in \eqref{eqChoiceOfT} we have shown
\begin{equation}\label{eqAccProof1finalPtwise}
\| \overline{v} - v \|_{\infty} \leq \max_{\tau \in [T]} \| ( \overline{Q} - Q ) Q^{\tau-1} c \|_{\infty} \frac{\alpha}{1-\alpha} + \frac{\epsilon}{2} ,
\end{equation}
and so, by the union bound,
\begin{equation}\label{eqPacGuaranteeToTinfNorms}
\P ( \| \overline{v} - v \|_{\infty} \geq \epsilon )  \leq \sum_{t=1}^T \P \left( \| ( \overline{Q} - Q ) Q^{t-1} c \|_{\infty} \geq \frac{\epsilon(1-\alpha)}{2\alpha} \right) .
\end{equation}
Now consider the $t$-th summand in \eqref{eqPacGuaranteeToTinfNorms}. Since $\overline{Q}$ and $\tilde{Q}$ have the same distribution, we can write
\begin{equation}\label{eqAccProof1BarToTilde}
\P \left( \| ( \overline{Q} - Q ) Q^{t-1} c \|_{\infty} \geq \frac{\epsilon(1-\alpha)}{2\alpha}  \right) = \P \left( \| ( \tilde{Q} - Q ) Q^{t-1} c \|_{\infty} \geq \frac{\epsilon(1-\alpha)}{2\alpha} \right).
\end{equation}
To bound the right side of \eqref{eqAccProof1BarToTilde}, we first define $d_{t-1} = Q^{t-1} c$ and observe that for any $s \in \mathcal{S}$,
\begin{equation}
\tilde{Q}(s,\cdot) Q^{t-1} c = \sum_{s' =1}^S \tilde{Q}(s,s') d_{t-1}(s') = \sum_{s' =1 }^S \left( \frac{1}{n} \sum_{i=1}^n 1 ( Y_{s,i} = s' ) \right) d_{t-1}(s') = \frac{1}{n} \sum_{i=1}^n d_{t-1} ( Y_{s,i} ) .
\end{equation}
Moreover, for any $s \in \mathcal{S}, i \in [n]$ we have
\begin{equation}
Q(s,\cdot) Q^{t-1} c = \sum_{s'=1}^s Q(s,s') d_{t-1}(s') = \sum_{s'=1}^s \P ( Y_{s,i} = s' ) d_{t-1}(s')  = \E d_{t-1} ( Y_{s,i} ) .
\end{equation}
Combining the previous two equations, we obtain
\begin{equation}
\| ( \tilde{Q} - Q ) Q^{t-1} c \|_{\infty} = \max_{s \in \mathcal{S}} \left| \frac{1}{n} \sum_{i=1}^n ( d_{t-1} ( Y_{s,i} ) - \E d_{t-1} ( Y_{s,i} ) ) \right| .
\end{equation}
Hence, using the previous equation, and again the union bound, we obtain
\begin{align}\label{eqPacGuaranteeToTinfNorms_s}
& \P \left( \| ( \tilde{Q} - Q ) Q^{t-1} c \|_{\infty} \geq \frac{\epsilon(1-\alpha)}{2\alpha} \right) \\
& \quad\quad \leq \sum_{s=1}^S \P \left( \left| \frac{1}{n} \sum_{i=1}^n ( d_{t-1} ( Y_{s,i} ) - \E d_{t-1} ( Y_{s,i} ) ) \right| \geq \frac{\epsilon(1-\alpha)}{2\alpha} \right) .
\end{align}
Now fix $s \in \mathcal{S}$. Recall $\{ Y_{s,i} \}_{i=1}^n$ are independent, and thus $\{ d_{t-1}(Y_{s,i}) \}_{i=1}^n$ are independent as well. Moreover, $d_{t-1}(Y_{s,i})$ takes values in $[0,\|c\|_{\infty} ]$. 
Thus, we can use the Chernoff bound \eqref{eqChernoffAbs} to obtain
\begin{align}
& \P \left( \left| \frac{1}{n} \sum_{i=1}^n ( d_{t-1} ( Y_{s,i} ) - \E d_{t-1} ( Y_{s,i} ) ) \right| \geq \frac{\epsilon(1-\alpha)}{2\alpha} \right) \\
& \quad\quad = \P \left( \left|  \sum_{i=1}^n \left( \frac{d_{t-1} ( Y_{s,i} )}{\|c\|_{\infty}} - \frac{\E d_{t-1} ( Y_{s,i} )}{\|c\|_{\infty}} \right) \right| \geq \frac{n \epsilon(1-\alpha)}{2 \|c\|_{\infty} \alpha} \right) \\
& \quad\quad \leq  2 \exp \left( - \frac{n \epsilon^2 (1-\alpha)^2 }{ 2 \| c \|_{\infty}^2 \alpha^2 } \right) \leq \frac{\delta}{ST} , \label{eqPacGuaranteeToTinfNorms_fin} 
\end{align}
where the final inequality holds assuming we choose $T$ as small as possible in \eqref{eqChoiceOfT} and by the assumption on $n$ in the statement of the theorem. Combining \eqref{eqAccProof1first}, \eqref{eqPacGuaranteeToTinfNorms}, \eqref{eqAccProof1BarToTilde}, \eqref{eqPacGuaranteeToTinfNorms_s}, and \eqref{eqPacGuaranteeToTinfNorms_fin} implies the theorem.

\begin{remark} \label{remOverOrUnder_acc1}
It may seem wasteful that we use the $\overline{Q}$-invariant instead of the $\underline{Q}$-invariant for Theorem \ref{thmAccuracy}, since $\underline{Q}$ fills unestimated rows of $\hat{Q}_{k_*}$ with the actual rows of $Q$, and thus $\underline{v}$ should be a better estimate of $v$. We explain this choice as follows. First note that by the arguments in the proof, bounding $\| \underline{v} - v \|_{\infty}$ amounts to bounding $\| ( \underline{Q} - Q ) Q^{t-1} c \|_{\infty}$. It is tempting to use the union bound to bound such terms as
\begin{align} \label{eqAcc1whyUnder}
\P \left( \| ( \underline{Q} - Q ) Q^{t-1} c \|_{\infty} \geq \eta \middle| U_{k_*} \right) 
&  \leq \sum_{s \in U_{k_*}} \P \left( \left| \frac{1}{n} \sum_{i=1}^n ( d_{t-1}(X_{s,i}) - \E d_{t-1}(X_{s,i})) \right|\geq \eta\middle| U_{k_*} \right) .
\end{align}
The issue with this approach is that there is a complicated dependence between $\{ X_{s,i} \}_{i=1}^n$ and $U_{k_*}$ in Algorithm \ref{algBackwardEPE}, so we cannot use standard concentration inqualities for the right side of \eqref{eqAcc1whyUnder}. We also note that we replace $\| ( \overline{Q} - Q ) Q^{t-1} c \|_{\infty}$ by $\| ( \tilde{Q} - Q ) Q^{t-1} c \|_{\infty}$ in the proof of Theorem \ref{thmAccuracy} owing to a similar issue.
\end{remark}

\begin{remark} \label{remAccProof1Crand}
This proof assumes the cost vector $c$ is deterministic; in the setting of Theorem \ref{thmEncountered}, the cost vector $C$ is random. In the latter case, we can replace $\P(\cdot)$ by $\P(\cdot|C)$ but otherwise follow the same proof to obtain $\P(\|\hat{v}_{k_*}-v\|_{\infty} \geq 2 \epsilon |C) \leq \delta\ a.s.$ and then average over $C$ to obtain the same result, assuming the lower bound on $n$ (which depends on $\|C\|_{\infty}$) holds almost surely.
\end{remark}

\section{Proof of Theorem \ref{thmEncountered}} \label{proofThmEncountered}

As for Theorem \ref{thmAccuracy}, we exploit the $\overline{Q}$-invariant \eqref{eqInvariantOverUnder} (note we proved Lemma \ref{lemInvariantUnknownQ} for fixed $c$ but the same arguments hold for random $C$ owing to their almost-sure nature). First observe that for any $s \in \mathcal{S}$,
\begin{equation}\label{eqInvariantToSk}
\overline{v}(s) \geq \hat{v}_{k_*}(s) = (1-\alpha) \sum_{k=1}^{k_*} r_{k-1}(s) 1 ( s = s_k ) \geq \epsilon (1-\alpha) \sum_{k=1}^{k_*}  1 ( s = s_k ) ,
\end{equation}
where the first inequality holds by the $\overline{Q}$-invariant \eqref{eqInvariantOverUnder}, the equality by Algorithm \ref{algBackwardEPE}, and the second inequality by definition of $k_*$. On the other hand, we have
\begin{equation}
|U_{k_*}| = | \cup_{s=1}^{k_*} N_{in}(s_k) | \leq \sum_{k=1}^{k_*} d_{in}(s_k) = \sum_{k=1}^{k_*} \sum_{s=1}^S d_{in}(s) 1 ( s = s_k ) = \sum_{s=1}^S d_{in}(s) \sum_{k=1}^{k_*} 1 ( s = s_k ) .
\end{equation}
Combining the previous two inequalities and taking expectation, we have therefore shown
\begin{equation}\label{eqBeforeExpectedVover}
\E |U_{k_*}| \leq \frac{1}{\epsilon(1-\alpha)} \sum_{s=1}^S d_{in}(s) \E \overline{v}(s) .
\end{equation}
Now consider $\E \overline{v}(s)$. By definition \eqref{eqOverDefn},
\begin{equation} \label{eqExpectedVover} 
\E \overline{v}(s) = \E \overline{\mu_s} C = (1-\alpha) \sum_{t=0}^{\infty} \alpha^t \E \overline{Q}^t(s,\cdot) C = (1-\alpha) \sum_{t=0}^{\infty} \alpha^t \E [ \E [ \overline{Q}^t(s,\cdot) | C ] C ] .
\end{equation}
Now after realizing $C$, we fill some rows of $\overline{Q}$ with samples generated during the algorithm and other rows with samples generated offline; in contrast, all rows of $\tilde{Q}$ are filled with offline samples. But in either case, these samples have the same distribution, so we can replace $\overline{Q}$ by $\tilde{Q}$ in the previous equation. Moreover, $\tilde{Q}$ is independent of the random variables in Algorithm \ref{algBackwardEPE}, including $r_0 = C$. In summary,
\begin{equation}\label{eqEncounteredOverToTilde}
\E [ \overline{Q}^t(s,\cdot) | C ] = \E [ \tilde{Q}^t(s,\cdot) | C ] = \E [ \tilde{Q}^t(s,\cdot) ] .
\end{equation}
Combining the previous two equations and using the assumption on $C$, we obtain
\begin{align}
\E \overline{v}(s) & = (1-\alpha) \sum_{t=0}^{\infty} \alpha^t \E [  \tilde{Q}^t(s,\cdot)  ] \E [ C] \leq (1-\alpha) \sum_{t=0}^{\infty} \alpha^t \E [  \tilde{Q}^t(s,\cdot)  ] \bar{c} 1_{S \times 1} = \bar{c} ,
\end{align}
where the final equality holds by row stochasticity of $\tilde{Q}$. Substituting into \eqref{eqBeforeExpectedVover} completes the proof.

\begin{remark} \label{remOverOrUnder_enc}
Note this approach fails if we use the $\underline{Q}$-invariant instead of the $\overline{Q}$-invariant. In particular, we cannot express $\E [ \underline{Q}^t(s,\cdot) | C ]$ as deterministic in \eqref{eqEncounteredOverToTilde}, since $C$ influences which states are encountered during the algorithm and thus influences which rows of $\underline{Q}$ are estimates and which are exact. This illustrates the utility of the $\overline{Q}$-invariant: it allows us to ``decorrelate'' the estimated transition matrix from the cost vector, i.e.\ to obtain $\E [ \overline{Q}^t(s,\cdot)  C ] = \E [ \tilde{Q}^t(s,\cdot) ] \E [ C ]$. In the current work, this is our only use of this decorrelation trick, but it may useful in analyses of algorithms like \texttt{Backward-EPE} (e.g.\ those discussed in Section \ref{secFuture}).
\end{remark}
\begin{remark} \label{remCompCompKnownQ}
The preceding proof is similar to the proof of Theorem 2 in \cite{lofgren2013personalized}, which considers the expected computational complexity of \texttt{Approx-Contributions} when $C \sim \{ e_s \}_{s=1}^S$ uniformly. In fact, \cite{lofgren2013personalized} uses the \texttt{Approx-Contributions} invariant \eqref{eqInvariantKnownQ} but otherwise follows the same logic leading to \eqref{eqBeforeExpectedVover}; since $\mu_s$ is deterministic in the \texttt{Approx-Contributions} setting, one immediately obtains $\E v(s) = \mu_s \E C = \mu_s 1_{S \times 1} / S = 1 / S$ in this case. Similarly, \cite{andersen2008local} provides an instance bound on $k_*$ for fixed $c$ of the form $c = e_{s^*}$; the proof uses \eqref{eqInvariantKnownQ} and the logic of \eqref{eqInvariantToSk} to obtain $v(s) \geq \epsilon(1-\alpha) \sum_{k=1}^{k_*} 1 ( s= s_k )$, then sums over $s$ to obtain $k_* \leq \| v \|_1 / ( \epsilon(1-\alpha) )$. 
\end{remark}

\section{Proof of Corollary \ref{corHnonzeros}} \label{proofCorHnonzeros}

Though we stated Theorem \ref{thmAccuracy} in the case of a deterministic cost vector $c$, it also holds for $C$ if the lower bound on $n$ holds almost surely (see Remark \ref{remAccProof1Crand}). Moreover, by assumption on $C$, $\|C\|_{\infty} = 1$ pointwise and thus $n^*(\epsilon,\delta)$ is deterministic; paired with the assumption on $\alpha,\delta,\epsilon$, we have $n^*(\epsilon,\delta) = O(\log S)$. Thus, the expected sample complexity of \texttt{Backward-EPE} is $\E [  |U_{k_*}| n^*(\epsilon,\delta) ] = O ( \E [|U_{k_*}| ] \log S)$. Again using the assumption on $C$, $\E C(s) = H / S\ \forall\ s \in \mathcal{S}$, so we can apply Theorem \ref{thmEncountered} with $\bar{c} = H/S$ to obtain $\E|U_{k_*}| = O ( H \bar{d} )$. Finally, since $U_{k_*} \subset \mathcal{S}$, we can sharpen this to obtain $\E|U_{k_*}| = O ( \min \{ H \bar{d} , S \} )$.  

\section{Proof of Theorem \ref{thmAccuracy2}} \label{proofThmAccuracy2}

Define $\underline{Q}, \underline{v}$ as in \eqref{eqUnderDefn}. We also define the events
\begin{gather}
E_1 = \cup_{s=1}^S \left\{ | \hat{v}_{BD}(s) - v(s) | \geq \epsilon_{rel} v(s) + \epsilon_{abs} \right\}  , \\
E_{2,s} =  \left\{ | \underline{v}(s) - v(s) | \geq \frac{\epsilon_{rel}}{2} v(s) + \frac{\epsilon_{abs}}{2} \right\} , \quad E_2 = \cup_{s=1}^S E_{2,s} , \\
E_{3,s} =  \left\{ | \hat{v}_{BD}(s) -\underline{v}(s) | \geq \frac{\epsilon_{rel}}{2} v(s) + \frac{\epsilon_{abs}}{2} \right\}, \quad E_3 = \cup_{s=1}^S E_{3,s}  .
\end{gather}
Further, let $\mathcal{G} = \sigma ( \{ \hat{v}_k , r_k , U_k , \hat{Q}_k , s_{k+1} \}_{k=0}^{k_*} )$ denote $\sigma$-algebra generated by the random variables in the Algorithm \ref{algBackwardEPE} subroutine of Algorithm \ref{algBidirectionalEPE}. Note in particular that $\underline{Q}$ is $\mathcal{G}$-measurable, and thus $\underline{v}$ is $\mathcal{G}$-measurable; consequently, $E_{2,s} \in \mathcal{G}$. Using these definitions, we state two key lemmas.
\begin{lemma} \label{lemAccProof2bw}
For $n_B$ as in the theorem statement, $\P ( E_2 ) \leq \delta / 2$.
\end{lemma}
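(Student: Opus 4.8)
The plan is to bound $\P(E_2) = \P(\cup_s E_{2,s})$ by a union bound over $s$, so it suffices to show $\P(E_{2,s}) \leq \delta/(2S)$ for each fixed $s$. Recall $\underline{Q}$ agrees with $Q$ on rows outside $U_{k_*}$ and with $\hat{Q}_{k_*}$ on rows inside $U_{k_*}$, where each estimated row uses $n_B$ independent samples. The key difficulty, flagged in Remark \ref{remOverOrUnder_acc1}, is that the set $U_{k_*}$ of estimated rows and the samples $\{X_{s,i}\}$ used to form them are intricately dependent in Algorithm \ref{algBackwardEPE}, so one cannot directly apply a Chernoff bound to $\|(\underline{Q}-Q)Q^{t-1}c\|_\infty$ conditioned on $U_{k_*}$. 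I expect this to be the main obstacle, and the way around it is the following decoupling observation: regardless of which rows end up in $U_{k_*}$, every estimated row of $\underline{Q}$ is an empirical distribution built from $n_B$ i.i.d.\ draws from the corresponding row of $Q$. So I would prove a ``good event'' statement that holds simultaneously for every row of $Q$, not just the estimated ones.

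Concretely, for each ordered pair $(i,j)$ with $Q(i,j)>0$, let $\hat{Q}(i,j) = \frac1{n_B}\sum_{\ell=1}^{n_B} 1(X_{i,\ell}=j)$ be the empirical estimate (defined for all $i$, whether or not $i$ is ultimately encountered, using the samples that would be drawn). Define the event
\begin{equation}
G = \bigcap_{i,j:\,Q(i,j)>0} \left\{ |\hat{Q}(i,j) - Q(i,j)| \leq \log(1+\epsilon_{rel}/2)\, Q(i,j) / K_\epsilon \right\},
\end{equation}
where $K_\epsilon = \ceil*{ \log(2\|c\|_\infty/\epsilon_{abs})/(1-\alpha) }$. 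Since $n_B \geq n_B^*$, a Bernstein/Chernoff bound for each $\hat{Q}(i,j)$ (the summands are Bernoulli with mean $Q(i,j) \geq \min_{i',j':Q(i',j')>0}Q(i',j')$, so the relative deviation is controlled by $n_B$ times that minimum probability) together with a union bound over the at most $S^2$ pairs gives $\P(G^c) \leq \delta/2$. This matches the form of $n_B^*$: the $\log(4S^2/\delta)$ accounts for the union bound, the $(\log(1+\epsilon_{rel}/2))^{-2}$ and $\min Q(i,j)$ for the per-pair Chernoff exponent, and the $K_\epsilon^2$ factor for the truncation argument below.

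On the event $G$, every row of $\underline{Q}$ is within a small multiplicative factor of the corresponding row of $Q$ entrywise, hence (since transitions are only along supergraph edges, which are along $Q$-edges) $\underline{Q}(s,s') \leq (1+\epsilon_{rel}/2)^{1/K_\epsilon} Q(s,s')$ for every $s,s'$, and likewise with a lower bound. Iterating over $t \leq K_\epsilon$ steps, $\underline{Q}^t(s,\cdot) c$ is sandwiched between $(1+\epsilon_{rel}/2)^{\pm 1} Q^t(s,\cdot)c$; truncating the geometric series \eqref{eqValFunc} at $t = K_\epsilon$ incurs bias at most $\|c\|_\infty \alpha^{K_\epsilon} \leq \epsilon_{abs}/2$ by the choice of $K_\epsilon$ (mirroring \eqref{eqChoiceOfT}). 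Combining, on $G$ we get $|\underline{v}(s)-v(s)| \leq (\epsilon_{rel}/2) v(s) + \epsilon_{abs}/2$ for all $s$ simultaneously, i.e.\ $G \subseteq E_2^c$, so $\P(E_2) \leq \P(G^c) \leq \delta/2$. The one place requiring care is making the multiplicative-error-to-additive-error passage in the presence of the truncation clean; I would handle the $t \leq K_\epsilon$ terms multiplicatively and the tail $t > K_\epsilon$ by the crude $2\|c\|_\infty\alpha^{K_\epsilon}$ bound, exactly as in the proof of Theorem \ref{thmAccuracy}.
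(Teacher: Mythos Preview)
Your proposal is correct and follows essentially the same route as the paper: reduce $E_2$ to entrywise multiplicative control $|\underline{Q}(i,j)-Q(i,j)|\le \lambda Q(i,j)$ with $\lambda=\log(1+\epsilon_{rel}/2)/K_\epsilon$, propagate this to $\underline{Q}^t c \in [(1-\lambda)^t,(1+\lambda)^t]\,Q^t c$ for $t\le K_\epsilon$, absorb the tail $t>K_\epsilon$ into $\epsilon_{abs}/2$, and close with a union bound over the at most $S^2$ pairs plus the relative Chernoff bound \eqref{eqChernoffRel}, which reproduces exactly the form of $n_B^*$. The only cosmetic difference is the decoupling step: the paper routes through the pointwise inequality $|\underline{Q}-Q|\le|\overline{Q}-Q|$ and the distributional identity $\overline{Q}\stackrel{d}{=}\tilde{Q}$, whereas you extend the probability space to include samples $\{X_{s,i}\}$ for every $s$ and define $G$ on the full family --- these are equivalent, though your parenthetical ``transitions are only along supergraph edges, which are along $Q$-edges'' is stated backwards (the correct reason $\underline{Q}(s,s')>0\Rightarrow Q(s,s')>0$ is simply that empirical samples from $Q(s,\cdot)$ land only in its support).
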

\begin{lemma} \label{lemAccProof2fw}
For $n_F$ as in the theorem statement and any $s \in \mathcal{S}$, $\P ( E_{3,s} | \mathcal{G} ) 1 ( E_{2,s}^C ) \leq \delta / (2S)\ a.s.$
\end{lemma}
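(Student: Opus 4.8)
The plan is to fix $s \in \mathcal{S}$, condition on $\mathcal{G}$, and reduce the claim to two conditional Chernoff bounds, one for "large" and one for "small" residual mass. By the $\underline{Q}$-invariant \eqref{eqInvariantOverUnder} we have $\underline{v}(s) = \hat{v}_{k_*}(s) + \underline{\mu_s} r_{k_*}$, so writing $\mu := \underline{\mu_s} r_{k_*}$ and $\hat{\mu} := \frac{1}{n_F}\sum_{i=1}^{n_F} r_{k_*}(Z_{s,i})$, the definition of $\hat{v}_{BD}$ in Algorithm \ref{algBidirectionalEPE} gives $\hat{v}_{BD}(s) - \underline{v}(s) = \hat{\mu} - \mu$, hence $E_{3,s} = \{ |\hat{\mu} - \mu| \geq \frac{\epsilon_{rel}}{2} v(s) + \frac{\epsilon_{abs}}{2} \}$. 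I will use three facts: (i) $r_{k_*} \geq 0$ (induction on the residual update in Algorithm \ref{algBackwardEPE} starting from $r_0 = c \geq 0$) and $\|r_{k_*}\|_{\infty} \leq \epsilon$ (definition of $k_*$), so each $r_{k_*}(Z_{s,i}) \in [0,\epsilon]$; (ii) since $\underline{Q}$ is $\mathcal{G}$-measurable and the endpoint of a $\textrm{Geometric}(1-\alpha)$-length walk on $\underline{Q}$ from $s$ has law $\underline{\mu_s}$ (the derivation preceding Algorithm \ref{algBidirectionalEPE}), the variables $r_{k_*}(Z_{s,i})$ are, conditional on $\mathcal{G}$, i.i.d., $[0,\epsilon]$-valued, with conditional mean $\underline{\mu_s} r_{k_*} = \mu$; and (iii) $\mu$ and $E_{2,s}$ are $\mathcal{G}$-measurable (the text notes $E_{2,s} \in \mathcal{G}$), and since $\hat{v}_{k_*}(s) \geq 0$ we have $\mu = \underline{v}(s) - \hat{v}_{k_*}(s) \leq \underline{v}(s)$, so on $E_{2,s}^C$, $\mu \leq \underline{v}(s) < (1+\frac{\epsilon_{rel}}{2}) v(s) + \frac{\epsilon_{abs}}{2} < \frac{3}{2} v(s) + \frac{\epsilon_{abs}}{2}$ (using $\epsilon_{rel} < 1$).

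I then split on the $\mathcal{G}$-measurable event $\{\mu \geq \epsilon_{abs}/12\}$ and its complement. On $E_{2,s}^C \cap \{\mu \geq \epsilon_{abs}/12\}$, set $\eta := \epsilon_{rel}/3 \in (0,1)$; then the bound on $\mu$ gives $\eta\mu < \frac{\epsilon_{rel}}{3}\big(\frac{3}{2} v(s) + \frac{\epsilon_{abs}}{2}\big) < \frac{\epsilon_{rel}}{2} v(s) + \frac{\epsilon_{abs}}{2}$, so on this event the inclusion $E_{3,s} \subseteq \{|\hat{\mu} - \mu| \geq \eta\mu\}$ holds surely. Applying the multiplicative Chernoff bound conditionally on $\mathcal{G}$ to $\sum_{i=1}^{n_F} r_{k_*}(Z_{s,i})/\epsilon$ (a sum of i.i.d.\ $[0,1]$-valued variables with conditional mean $n_F\mu/\epsilon$) yields $\P(|\hat{\mu}-\mu| \geq \eta\mu \mid \mathcal{G}) \leq 2\exp(-n_F\eta^2\mu/(3\epsilon))$ a.s.; on $\{\mu \geq \epsilon_{abs}/12\}$ and for $n_F \geq n^*_F$ the exponent is at least $n^*_F (\epsilon_{rel}/3)^2 (\epsilon_{abs}/12)/(3\epsilon) = \log(4S/\delta)$, which is exactly where the constant $324 = 27 \cdot 12$ in $n^*_F$ comes from, so the bound is at most $\delta/(2S)$. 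Multiplying through by the $\mathcal{G}$-measurable indicator $1(E_{2,s}^C)1(\mu \geq \epsilon_{abs}/12)$ gives the claim on this part of the space.

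On $E_{2,s}^C \cap \{\mu < \epsilon_{abs}/12\}$ the lower tail is deterministic: $\mu - \hat{\mu} \leq \mu < \epsilon_{abs}/12 < \frac{\epsilon_{rel}}{2} v(s) + \frac{\epsilon_{abs}}{2}$ since $\hat{\mu} \geq 0$ and $v(s) \geq 0$, so it suffices to bound $\P(\hat{\mu} - \mu \geq \frac{\epsilon_{abs}}{2} \mid \mathcal{G})$. With $X := \sum_{i=1}^{n_F} r_{k_*}(Z_{s,i})/\epsilon$, conditional mean $\mu_X := n_F\mu/\epsilon$, and $t := n_F(\mu + \epsilon_{abs}/2)/\epsilon$, the bound $\mu < \epsilon_{abs}/12$ forces $t \geq n_F\epsilon_{abs}/(2\epsilon) > 6\mu_X$, so the upper-tail Chernoff bound $\P(X \geq t \mid \mathcal{G}) \leq (e\mu_X/t)^{t}$ gives $\P(X \geq t \mid \mathcal{G}) \leq (e/6)^{t} \leq 2^{-n_F\epsilon_{abs}/(2\epsilon)}$, which is at most $\delta/(2S)$ once $n_F \geq n^*_F$ (crudely using $2/\log 2 < 3 < 324/\epsilon_{rel}^2$ and $\log(2S/\delta) \leq \log(4S/\delta)$). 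Adding the two cases, multiplied by their respective $\mathcal{G}$-measurable indicators, gives $\P(E_{3,s} \mid \mathcal{G})\,1(E_{2,s}^C) \leq \delta/(2S)$ a.s.

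I expect the main obstacle to be the measure-theoretic bookkeeping rather than the Chernoff estimates: one must check that the case split is on $\mathcal{G}$-measurable events, that conditionally on $\mathcal{G}$ the endpoints $Z_{s,i}$ genuinely have law $\underline{\mu_s}$ (so the conditional Chernoff bounds apply with the clean mean $\mu = \underline{\mu_s} r_{k_*}$ furnished by the $\underline{Q}$-invariant), and that every set inclusion used to pass from $E_{3,s}$ to a tail event holds surely on the relevant case. Once the threshold $\epsilon_{abs}/12$ and the deviation $\eta = \epsilon_{rel}/3$ are fixed, reconciling the constant and the $\log(4S/\delta)$ factor of $n^*_F$ is routine arithmetic.
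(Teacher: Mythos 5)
Your proof is correct and follows essentially the same route as the paper's: the same case split on whether the conditional mean residual exceeds $\epsilon_{abs}/12$ (equivalently $\E[\bar{Z}_s|\mathcal{G}] \geq n_F\epsilon_{abs}/(12\epsilon)$), the same use of the $\underline{Q}$-invariant to identify the conditional mean, the multiplicative Chernoff bound with $\eta = \epsilon_{rel}/3$ in the large-mean case, the deterministic exclusion of the lower tail plus the $6\E R$ upper-tail bound in the small-mean case, and the same accounting that produces the constant $324$. The only differences are cosmetic (a slightly more direct algebraic path from $E_{2,s}^C$ to $\eta\mu < \tfrac{\epsilon_{rel}}{2}v(s)+\tfrac{\epsilon_{abs}}{2}$, and citing the $(e\mu_X/t)^t$ form of the tail bound rather than the packaged $2^{-\eta}$ version).
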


Before proving the lemmas, we show that they imply the theorem. Towards this end, first note $E_1 \subset E_2 \cup E_3$ by the triangle inequality, so $E_1 \cap E_2^C \subset E_3 \cap E_2^C$. Consequently,
\begin{equation}
\P ( E_1 ) = \P ( E_1 \cap E_2 ) + \P ( E_1 \cap E_2^C ) \leq \P ( E_2 ) + \P ( E_3 \cap E_2^C ) .
\end{equation}
Furthermore, by the union bound and monotonicity, we have
\begin{equation}
\P ( E_3 \cap E_2^C ) \leq \sum_{s=1}^S \P ( E_{3,s} \cap E_2^C ) \leq \sum_{s=1}^S \P ( E_{3,s} \cap E_{2,s}^C ) .
\end{equation}
Now fix $s \in \mathcal{S}$. Then since $E_{2,s}^C \in \mathcal{G}$, we can write
\begin{equation}
\P ( E_{3,s} \cap E_{2,s}^C ) = \E [ \P ( E_{3,s} | \mathcal{G} ) 1 ( E_{2,s}^C ) ] .
\end{equation}
Combining the previous three inequalities with the two lemmas, we obtain
\begin{equation}
\P ( E_1 ) \leq \P ( E_2 ) + \sum_{s=1}^S \E [ \P ( E_{3,s} | \mathcal{G} ) 1 ( E_{2,s}^C ) ] \leq \delta ,
\end{equation}
and by definition of $E_1$, the theorem follows. We next return to prove the lemmas.

\subsection{Proof of Lemma \ref{lemAccProof2bw}}

First, we define the constants
\begin{equation}
\bar{T} = \ceil*{ \frac{\log ( 2 \| c \|_{\infty} / \epsilon_{abs} ) }{1-\alpha} } , \quad \lambda = \frac{\log ( 1 + \epsilon_{rel} / 2 ) }{ \bar{T} } .
\end{equation}
Next, we prove the following implication:
\begin{equation}\label{eqRelErrTildeValueImp}
| \underline{Q}(s,s') - Q(s,s') | \leq \lambda Q(s,s')\ \forall\ s,s' \in \mathcal{S} \quad \Rightarrow \quad | \underline{v}(s) - v(s) | \leq \frac{\epsilon_{rel}}{2} v(s) + \frac{\epsilon_{abs}}{2}\ \forall\ s \in \mathcal{S} .
\end{equation}
Assume the left side of \eqref{eqRelErrTildeValueImp} holds and fix $s \in \mathcal{S}$. Then clearly
\begin{gather}
 (1-\alpha) \sum_{t=\bar{T}}^{\infty} \alpha^t \underline{Q}^t(s,\cdot) c \leq  (1-\alpha) \sum_{t=\bar{T}}^{\infty} \alpha^t \| c \|_{\infty}  = \alpha^{\bar{T}} \| c \|_{\infty}  \leq e^{-(1-\alpha) \bar{T}} \| c \|_{\infty}  \leq \frac{ \epsilon_{abs}}{2}  \\
\Rightarrow \underline{v}(s) = (1-\alpha) \sum_{t=0}^{\infty} \alpha^t \underline{Q}^t(s,\cdot) c \leq (1-\alpha) \sum_{t=0}^{\bar{T}-1} \alpha^t \underline{Q}^t(s,\cdot) c + \frac{ \epsilon_{abs}}{2} . \label{eqRelErrTildeValueUpperSum}
\end{gather}
We next upper bound the term $\underline{Q}^t(s,\cdot) c$ in the $t$-th summand of \eqref{eqRelErrTildeValueUpperSum}. For $t =0$, this term is simply $c(s)$. For $t=1$, the left side of \eqref{eqRelErrTildeValueImp} implies
\begin{equation}
\underline{Q}(s,\cdot) c = \sum_{s'=1}^S \underline{Q}(s,s') c(s') \leq (1+\lambda) \sum_{s'=1}^S Q(s,s') c(s') = (1+\lambda) Q(s,\cdot) c .
\end{equation}
Finally, for $t \in \{2,\ldots,\bar{T}-1\}$, the left side of \eqref{eqRelErrTildeValueImp} similarly gives
\begin{align}
\underline{Q}^t(s,\cdot) c  & = \sum_{s'\in \mathcal{S}}  \sum_{s_1,\ldots,s_{t-1} \in \mathcal{S}} \underline{Q}(s,s_1) \underline{Q}(s_1,s_2) \cdots \underline{Q}(s_{t-2},s_{t-1}) \underline{Q}(s_{t-1},s')  c(s')   \\
& \leq (1+\lambda)^t \sum_{s'\in \mathcal{S}} \sum_{s_1,\ldots,s_{t-1} \in \mathcal{S}} {Q}(s,s_1) {Q}(s_1,s_2) \cdots {Q}(s_{t-2},s_{t-1}) {Q}(s_{t-1},s') c(s') \\
& = (1+\lambda)^t Q^t(s,\cdot) c .
\end{align}
In summary, we have shown $\underline{Q}^t(s,\cdot) c \leq (1+\lambda)^t Q^t(s,\cdot) c\ \forall\ t \in \{0,\ldots,\bar{T}-1\}$. Also, for such $t$,
\begin{equation}\label{eqExpLamBarT}
(1+\lambda)^t \leq (1+\lambda)^{\bar{T}} \leq e^{ \lambda \bar{T} } \leq 1 + \frac{ \epsilon_{rel} }{2} .
\end{equation}
Combining these observations, we can further bound \eqref{eqRelErrTildeValueUpperSum} as
\begin{equation}\label{eqRelErrTildeValueImpUpperFinal}
\underline{v}(s) \leq \left( 1 + \frac{\epsilon_{rel}}{2} \right) (1-\alpha) \sum_{t=0}^{\bar{T}-1} \alpha^t Q^t(s,\cdot) c + \frac{ \epsilon_{abs}}{2}  \leq \left( 1 + \frac{\epsilon_{rel}}{2} \right) v(s) + \frac{ \epsilon_{abs}}{2} .
\end{equation}
For a lower bound on $\underline{v}(s)$, we similarly have
\begin{align}
\underline{v}(s) & \geq (1-\alpha) \sum_{t=0}^{\bar{T}-1} \alpha^t \underline{Q}^t(s,\cdot) c \geq (1-\lambda)^{\bar{T}}  (1-\alpha) \sum_{t=0}^{\bar{T}-1} \alpha^t Q^t(s,\cdot) c  \\
& = (1-\lambda)^{\bar{T}} \left( {v}(s) - (1-\alpha) \sum_{t=\bar{T}}^{\infty} \alpha^t Q^t(s,\cdot) c \right) \geq  (1-\lambda)^{\bar{T}} \left( {v}(s)  - \frac{ \epsilon_{abs}}{2} \right) .
\end{align}
We now loosen this bound so it matches the form of the upper bound. First, by convexity and \eqref{eqExpLamBarT},
\begin{equation}
2 = 2 \left( \frac{1+\lambda}{2} + \frac{1-\lambda}{2} \right)^{\bar{T}} \leq (1+\lambda)^{\bar{T}} +  (1-\lambda)^{\bar{T}}  \leq \left( 1 +  \frac{ \epsilon_{rel}}{2} \right) + (1-\lambda)^{\bar{T}} ,
\end{equation}
and so $(1-\lambda)^{\bar{T}} \geq 1 - \epsilon_{rel} / 2$. Since also $(1-\lambda)^{\bar{T}}  \leq 1$, we thus obtain
\begin{equation}\label{eqRelErrTildeValueImpLowerFinal}
\underline{v}(s) \geq \left( 1 -  \frac{ \epsilon_{rel}}{2} \right) \left( {v}(s)  - \frac{ \epsilon_{abs}}{2} \right) \geq \left( 1 -  \frac{ \epsilon_{rel}}{2} \right)  {v}(s)  - \frac{ \epsilon_{abs}}{2} . 
\end{equation}
In summary, we have shown that if the left side of \eqref{eqRelErrTildeValueImp} holds, then \eqref{eqRelErrTildeValueImpUpperFinal} and \eqref{eqRelErrTildeValueImpLowerFinal} hold as well. Since \eqref{eqRelErrTildeValueImpUpperFinal} and \eqref{eqRelErrTildeValueImpLowerFinal} together imply the right side of \eqref{eqRelErrTildeValueImp}, \eqref{eqRelErrTildeValueImp} is proven. We can now use \eqref{eqRelErrTildeValueImp} to prove the lemma. First note that \eqref{eqRelErrTildeValueImp} and the union bound together
\begin{align}\label{eqRelErrTildeValueFinalUnion}
\P ( E_2 ) & \leq \P \left( \cup_{s,s' \in \mathcal{S}} \{ | \underline{Q}(s,s') - Q(s,s') | > \lambda Q(s,s') \} \right) \\
& \leq \sum_{s,s' \in \mathcal{S}} \P ( | \underline{Q}(s,s') - Q(s,s') | > \lambda Q(s,s') ) . 
\end{align}
Now for the $(s,s')$-th summand in \eqref{eqRelErrTildeValueFinalUnion}, we first note
\begin{align}\label{eqRelErrPassToTilde}
\P ( | \underline{Q}(s,s') - Q(s,s') | > \lambda Q(s,s')  ) & \leq \P ( | \overline{Q}(s,s') - Q(s,s') | > \lambda Q(s,s')  ) \\
& = \P ( | \tilde{Q}(s,s') - Q(s,s') | > \lambda Q(s,s')  )
\end{align}
where the inequality holds since $| \underline{Q}(s,s') - Q(s,s') | \leq | \overline{Q}(s,s') - Q(s,s') |$ pointwise by \eqref{eqOverDefn}-\eqref{eqUnderDefn} and uses convexity, and the equality holds since $\overline{Q}$ and $\tilde{Q}$ have the same distribution. Substituting into \eqref{eqRelErrTildeValueFinalUnion}, we obtain
\begin{equation}\label{eqRelErrTildeValueFinalUnion_2}
\P(E_2) \leq \sum_{s,s' \in \mathcal{S}} \P ( | \tilde{Q}(s,s') - Q(s,s') | > \lambda Q(s,s') ) ,
\end{equation}
so our goal is to bound each summand in \eqref{eqRelErrTildeValueFinalUnion_2} by $\delta / (2S^2)$. If $Q(s,s') = 0$, this is trivial; if instead $Q(s,s') > 0$, the Chernoff bound \eqref{eqChernoffRel} implies
\begin{align}
\P ( | \tilde{Q}(s,s') - Q(s,s') | > \lambda Q(s,s') ) 
& \leq 2 \exp \left( - \frac{n_B \lambda^2 \min_{i,j \in \mathcal{S}: Q(i,j) > 0} Q(i,j) }{3} \right) \leq \frac{\delta}{2 S^2} ,
\end{align}
where the final inequality holds by assumption on $n_B$.

\subsection{Proof of Lemma \ref{lemAccProof2fw}}

Fix $s \in \mathcal{S}$. Then by definition of $E_{2,s}, E_{3,s}$, we aim to show
\begin{equation}\label{eqRelErrFwImp}
| \underline{v}(s) - v(s) | < \frac{\epsilon_{rel}}{2} v(s) + \frac{\epsilon_{abs}}{2} \quad \Rightarrow \quad \P \left( | \hat{v}_{BD}(s) - \underline{v}(s) | \geq \frac{\epsilon_{rel}}{2} v(s) + \frac{\epsilon_{abs}}{2} \middle| \mathcal{G} \right) \leq \frac{\delta}{2S}\ a.s.
\end{equation}
Assume the left side of \eqref{eqRelErrFwImp} holds. Recall that by Algorithm \ref{algBidirectionalEPE} and the $\underline{Q}$-invariant \eqref{eqInvariantOverUnder},
\begin{equation}\label{eqRelErrDefnInvariant}
\hat{v}_{BD}(s) = \hat{v}_{k_*}(s) + \frac{1}{n_F} \sum_{i=1}^{n_F} r_{k_*} ( Z_{s,i} ) , \quad \underline{v}(s) = \hat{v}_{k_*}(s) +  \underline{\mu_s} r_{k_*} = v_{k_*}(s) + \frac{1}{n_F} \sum_{i=1}^{n_F} \E [ r_{k_*} ( Z_{s,i} ) | \mathcal{G} ] .
\end{equation}
Consequently, defining $\bar{Z}_s = \sum_{i=1}^{n_F} r_{k_*} ( Z_{s,i} ) / \epsilon$, we have
\begin{align}\label{eqPerfectSamplingRewrite}
\P \left( | \hat{v}_{BD}(s) - \underline{v}(s) | > \frac{\epsilon_{rel}}{2} v(s) + \frac{\epsilon_{abs}}{2} \middle| \mathcal{G} \right) & = \P \left( | \bar{Z}_s - \E [ \bar{Z}_s | \mathcal{G} ] | > \frac{n_F}{\epsilon} \left( \frac{\epsilon_{rel}}{2} v(s) + \frac{\epsilon_{abs}}{2}  \right) \middle| \mathcal{G} \right) .
\end{align}
Note that conditioned on $\mathcal{G}$, $\bar{Z}_s$ is a sum of independent $[0,1]$-valued random variables, so the Chernoff bounds from Appendix \ref{appChernoff} apply. We apply a different bound for each of the following two cases:
\begin{itemize}
\item $\E [ \bar{Z}_s | \mathcal{G} ] < n_F \epsilon_{abs} / ( 12 \epsilon )$: Here we bound the right side of \eqref{eqPerfectSamplingRewrite} as
\begin{align}
& \P \left( | \bar{Z}_s - \E [ \bar{Z}_s | \mathcal{G} ] | > \frac{n_F}{\epsilon} \left( \frac{\epsilon_{rel}}{2} v(s) + \frac{\epsilon_{abs}}{2}  \right) \middle| \mathcal{G} \right) \leq \P \left( | \bar{Z}_s - \E [ \bar{Z}_s | \mathcal{G} ] | >\frac{n_F \epsilon_{abs}}{2 \epsilon}  \middle| \mathcal{G} \right) \\
& \quad\quad = \P \left( \bar{Z}_s - \E [ \bar{Z}_s | \mathcal{G} ] >\frac{n_F \epsilon_{abs}}{2 \epsilon}  \middle| \mathcal{G} \right)  + \P \left( \E [ \bar{Z}_s | \mathcal{G} ] - \bar{Z}_s >\frac{n_F \epsilon_{abs}}{2 \epsilon}  \middle| \mathcal{G} \right) \\
& \quad\quad \leq \P \left( \bar{Z}_s > \frac{n_F \epsilon_{abs}}{2 \epsilon}  \middle| \mathcal{G} \right) ,
\end{align}
where the first inequality and the equality are immediate, and the second inequality holds since, by assumption on $\E [ \bar{Z}_s | \mathcal{G} ]$, $\E [ \bar{Z}_s | \mathcal{G} ] - \bar{Z}_s \leq \E [ \bar{Z}_s | \mathcal{G} ] < n_F \epsilon_{abs} / ( 12 \epsilon ) < n_F \epsilon_{abs} / ( 2 \epsilon )$, so $\E [ \bar{Z}_s | \mathcal{G} ] - \bar{Z}_s > n_F \epsilon_{abs} / ( 2 \epsilon )$ cannot occur. For the remaining term, recall $\E [ \bar{Z}_s | \mathcal{G} ] < (1/6) \times n_F \epsilon_{abs} / ( 2 \epsilon )$, so we can use the Chernoff bound \eqref{eqChernoffLargeEta}. Combined with the above, we obtain
\begin{align}
\P \left( | \bar{Z}_s - \E [ \bar{Z}_s | \mathcal{G} ] | > \frac{n_F}{\epsilon} \left( \frac{\epsilon_{rel}}{2} v(s) + \frac{\epsilon_{abs}}{2}  \right) \middle| \mathcal{G} \right)  & \leq \P \left( \bar{Z}_s > \frac{n_F \epsilon_{abs}}{2 \epsilon}  \middle| \mathcal{G} \right) \\
&  \leq 2^{-n_F \epsilon_{abs} / ( 2 \epsilon ) } \leq \frac{\delta}{4S} ,
\end{align}
where the final inequality holds since, by the theorem statement,
\begin{equation}\label{eqRelErrFwJustOneNfBound}
n_F \geq \frac{ 324 \epsilon \log ( 4 S / \delta ) }{ \epsilon_{rel}^2 \epsilon_{abs} } = \frac{162}{ \epsilon_{rel}^2 \log_2 e } \frac{ 2 \epsilon \log_2(4S/\delta) }{ \epsilon_{abs} } \geq \frac{ 2 \epsilon \log_2(4S/\delta) }{ \epsilon_{abs} } .
\end{equation}

\item $\E [ \bar{Z}_s | \mathcal{G} ] \geq n_F \epsilon_{abs} / ( 12 \epsilon )$: We first observe
\begin{equation}
\underline{v}(s) < \left( 1 + \frac{ \epsilon_{rel} }{2} \right) v(s) + \frac{ \epsilon_{abs} }{2}  \quad \Leftrightarrow \quad \frac{ \underline{v}(s) - \epsilon_{abs} / 2 }{ 1 + \epsilon_{rel} / 2 } < v(s) .
\end{equation}
Consequently, the left side of \eqref{eqRelErrFwImp} implies 
\begin{align}
\frac{\epsilon_{rel}}{2} v(s) + \frac{\epsilon_{abs}}{2} & > \frac{\epsilon_{rel}}{2} \frac{ \underline{v}(s) - \epsilon_{abs} / 2 }{ 1 + \epsilon_{rel} / 2 } + \frac{\epsilon_{abs}}{2} \\
& = \frac{ \epsilon_{rel} \underline{v}(s) }{ 2 + \epsilon_{rel} } + \frac{\epsilon_{abs}}{2} \left( 1 - \frac{\epsilon_{rel}/2}{1+\epsilon_{rel}/2} \right) > \frac{ \epsilon_{rel} \underline{v}(s) }{3} ,
\end{align}
where the final inequality holds by $\epsilon_{rel} \in (0,1)$. Since also $\underline{v}(s) \geq \E [ r_{k_*} ( Z_{s,i} ) | \mathcal{G} ]$ by \eqref{eqRelErrDefnInvariant}, we thus obtain
\begin{equation}
\frac{n_F}{\epsilon} \left( \frac{\epsilon_{rel}}{2} v(s) + \frac{\epsilon_{abs}}{2}  \right) > \frac{n_F}{\epsilon}  \frac{ \epsilon_{rel} \E [ r_{k_*} ( Y_{s,i} ) | \mathcal{G} ] }{3} = \frac{\epsilon_{rel}}{3} \frac{n_F \E [ r_{k_*} ( Y_{s,i} ) | \mathcal{G} ]}{\epsilon} = \frac{\epsilon_{rel}}{3}  \E [ \bar{Z}_s | \mathcal{G} ] .
\end{equation}
Therefore, we can bound the right side of \eqref{eqPerfectSamplingRewrite} as
\begin{align}
& \P \left( | \bar{Z}_s - \E [ \bar{Z}_s | \mathcal{G} ] | > \frac{n_F}{\epsilon} \left( \frac{\epsilon_{rel}}{2} v(s) + \frac{\epsilon_{abs}}{2}  \right) \middle| \mathcal{G} \right) \leq \P \left( | \bar{Z}_s - \E [ \bar{Z}_s | \mathcal{G} ] | > \frac{\epsilon_{rel}}{3}  \E [ \bar{Z}_s | \mathcal{G} ]  \middle| \mathcal{G} \right) \\
& \quad\quad \leq 2 \exp \left( - \frac{ (\epsilon_{rel} / 3)^2 }{3} \E [ \bar{Z}_s | \mathcal{G} ] \right) \leq 2 \exp \left( - \frac{ \epsilon_{rel}^2 }{ 27 } \frac{n_F \epsilon_{abs}}{ 12 \epsilon } \right) \leq \frac{\delta}{2S} ,
\end{align}
where we used the Chernoff bound \eqref{eqChernoffRel}, the $\E [ \bar{Z}_s | \mathcal{G}]$ assumption, and the assumption on $n_F$.
\end{itemize}

\begin{remark} \label{remOverOrUnder_acc2}
While the choice of invariant used to prove Theorems \ref{thmAccuracy} and \ref{thmEncountered} was subtle (see Remarks \ref{remOverOrUnder_acc1} and \ref{remOverOrUnder_enc}), choosing the $\underline{Q}$-invariant for Theorem \ref{thmAccuracy2} is rather obvious, since we explicitly use $\underline{Q}$ in Algorithm \ref{algBidirectionalEPE}.
\end{remark}

\begin{remark} \label{remAccProof2Crand}
The proof of Lemma \ref{lemAccProof2bw} extends to random cost vectors $C$ by replacing $\P(\cdot)$ by $\P(\cdot|C)$ and then averaging over $C$, similar to the proof of Theorem \ref{thmAccuracy} (see Remark \ref{remAccProof1Crand}). Furthermore, recall $r_0 = C$ and thus $C$ is $\mathcal{G}$-measurable by definition of $\mathcal{G}$, so the proof of Lemma \ref{lemAccProof2bw} is identical in the case of random cost $C$. Thus, when $C$ is random, Lemmas \ref{lemAccProof2bw} and \ref{lemAccProof2fw} hold and can be used to prove the theorem as above.
\end{remark}

\section{Alternative approach} \label{appResampling}

The alternative approach is defined in Algorithm \ref{algBackwardEPEresample}. In contrast to \texttt{Backward-EPE}, we estimate $Q(s,s_k)$ as follows at each iteration $k$: for $s \in N_{in}(s_k)$ we draw independent samples $\{ X_{s,i}^k \}_{i=1}^n$ from $Q(s,\cdot)$, and for $s \notin N_{in}(s_k)$ we set $\hat{Q}_k(s,s_k) = 0$; note the estimate of $Q(s,s_k)$ is exact in the latter case owing to \eqref{eqAbsContCond}. We then compute $\hat{v}_k, r_k$ using the update rule from \texttt{Backward-EPE}. Finally, as in \texttt{Backward-EPE}, we terminate when $\|r_k\|_{\infty} \leq \epsilon$.

\begin{algorithm}
\caption{ \texttt{Backward-EPE-Alternative} } \label{algBackwardEPEresample}
\KwIn{Sampler for transition matrix $Q$; cost vector $c$; discount factor $\alpha$; supergraph in-neighbors $\{N_{in}(s)\}_{s=1}^S$; termination parameter $\epsilon$; per-state sample count $n$}
$k = 0$, $\hat{v}_k = 0_{S \times 1}$, $r_k = c$

\While{$\|r_k\|_{\infty} > \epsilon$}{%
	$k \leftarrow k+1$, $s_k \sim \argmax_{s \in \mathcal{S}} r_{k-1}(s)$ uniformly

	\For{$s \in \mathcal{S}$}{
		\lIf{$s \in N_{in}(s_k)$}{$\{ X_{s,i}^k \}_{i=1}^n \sim Q(s,\cdot)$, $\hat{Q}_k(s,s_k) = \frac{1}{n} \sum_{i=1}^n 1 ( X_{s,i}^k= s_k)$ } 
		\lElse{$\hat{Q}_k(s,s_k) = 0$}
		\lIf{$s = s_k$}{$\hat{v}_k(s) = \hat{v}_{k-1}(s) + (1-\alpha) r_{k-1}(s)$, $r_k(s) = \alpha \hat{Q}_k(s,s_k) r_{k-1}(s_k)$}
		\lElse{$\hat{v}_k(s) = \hat{v}_{k-1}(s)$, $r_k(s) = r_{k-1}(s) + \alpha \hat{Q}_k(s,s_k) r_{k-1}(s_k)$}
	}
} 
\KwOut{Estimate $\hat{v}_{k}$ of $v = (1-\alpha) \sum_{t=0}^{\infty} \alpha^t Q^t c$}
\end{algorithm}

We next derive the martingale property mentioned in Section \ref{secFuture}. Toward this end, first let $\mu_s = (1-\alpha) e_s^{\tr} ( I - \alpha Q )^{-1}$ as in Appendix \ref{appComparison} and define $e_k(s) = \hat{v}_k(s) + \mu_s r_k - v(s)$. Note that if $Q$ is known and $\hat{v}_k(s), r_k$ are generated by the existing algorithm \texttt{Approx-Contributions}, then $\hat{v}_k(s) + \mu_s r_k = v(s)$ (see \eqref{eqInvariantKnownQ} in Appendix \ref{appComparison}); thus, $e_k(s)$ is the error process that arises when $Q$ is unknown in Algorithm \ref{algBackwardEPEresample}. Next, define a filtration $\{ \mathcal{F}_k \}_{k=0}^{k_*}$ by $\mathcal{F}_k = \sigma ( \{ \hat{v}_{k'} , r_{k'} , s_{{k'}+1} \}_{k'=0}^k )$, where by $\sigma(\cdot)$ we mean the generated $\sigma$-algebra. Now fix $k \in [k_*], s \in \mathcal{S}$. Then by the iterative update in Algorithm \ref{algBackwardEPEresample}, we have
\begin{align}
e_k(s) & = ( \hat{v}_{k-1}(s) + (1-\alpha) r_{k-1}(s) 1 ( s = s_k ) ) \\
& \quad\quad + \sum_{s'=1}^s \mu_s(s') ( r_{k-1}(s') 1 ( s' \neq s_k) + \alpha \hat{Q}_k(s',s_k) r_{k-1}(s_k) ) - v(s) \\
& = e_{k-1}(s) + r_{k-1}(s_k) \left( - \mu_s(s_k) +  (1-\alpha) 1 ( s = s_k ) + \alpha \sum_{s'=1}^s \mu_s(s') \hat{Q}_k(s',s_k) \right) \label{eqResampleErrorRecursion}
\end{align}
Note that all terms in \eqref{eqResampleErrorRecursion} except $\hat{Q}_k(s',s_k)$ are $\mathcal{F}_{k-1}$-measurable, and therefore
\begin{align}
& \E [ e_k(s) | \mathcal{F}_{k-1} ] - e_{k-1}(s) \\
& \quad\quad =  r_{k-1}(s_k) \left( - \mu_s(s_k) +  (1-\alpha) 1 ( s = s_k ) + \alpha \sum_{s'=1}^s \mu_s(s') \E [ \hat{Q}_k(s',s_k) | \mathcal{F}_{k-1} ]  \right) \\
& \quad\quad = r_{k-1}(s_k) \left( - \mu_s(s_k) +  (1-\alpha) 1 ( s = s_k ) + \alpha \sum_{s'=1}^s \mu_s(s') Q(s',s_k)  \right) = 0,
\end{align}
where the first two equalities hold by Algorithm \ref{algBackwardEPEresample} and the third holds similar to \eqref{eqInvariantUnknownQindHypProof}. Hence, $\E [ e_k(s) | \mathcal{F}_{k-1} ] = e_{k-1}(s)$, i.e.\ $\{ e_k(s) \}_{k=0}^{k_*}$ is a martingale. Also note $e_0(s) = \hat{v}_0(s) + \mu_s r_0 - v(s) = 0 + \mu_s c - v(s) = 0$. Taken together, we conclude $\E e_k(s) = 0$. Thus, by definition of the error process, the \texttt{Approx-Contributions} invariant holds in expectation.

\section{Experimental details} \label{appExperiment}

\noindent \textbf{Generating random problem instances:} To generate $Q$, we elementwise multiply a matrix of independent $\textrm{Uniform}([0,1])$ random variables with a matrix of independent $\textrm{Bernoulli}(p/S)$ random variables, then normalize so that each row sums to $1$. Varying $p$ allows us to control $\bar{d}$; observe in particular that $\E \bar{d} = p$. To generate $c$, we let $c_1$ be a vector of independent $\textrm{Bernoulli}(p/S)$ random variables, $c_2$ a vector of independent $\textrm{Uniform}[0,p/S]$ random variables, and $c = c_1 + c_2$. Note that $\E \|c\|_1 = S ( \frac{p}{S} + \frac{p}{2S} ) = \frac{3p}{2}$ and $\|c\|_{\infty} \in [1,2]$ assuming $c_1 \neq 0$ and $p \leq S$; thus, $\E \|c\|_1 / \| c \|_{\infty} = \Theta(p)$ in this case. Taken together, we (roughly) have $\bar{d} \|c\|_1 / \|c\|_{\infty} = \Theta ( p^2 )$. Note our generation of $Q$ is ill-defined if the Bernoulli matrix has any rows summing to $0$; thus, we resample this matrix until all row sums are positive. We also resample $c_1$ until $c_1 \neq 0_{S \times 1}$ to ensure at least one high-cost state. In practice, $\bar{d} \|c\|_1 / \|c\|_{\infty} \approx \Theta ( p^2 )$ still holds after this resampling.

\noindent \textbf{Figure \ref{figBackwardNumerical} experiment parameters:} We simulate the algorithms for $S \in \{100,200,400,800,1600\}$ and for a variety of $p$. In particular, Case 1 sets $p = 10$ for each $S$, Case 2 sets $p = p(S) = ( 100 S )^{1/4}$, and Case 3 sets $p = p(S) = \sqrt{S}$. Note all three cases yield $p = 10$ when $S = 100$, which is why the $S = 100$ datapoints are similar across cases. For each $S$ and each case of $p$, we run $100$ trials (i.e.\ we generate $100$ different problem instances and run both algorithms for each problem instance). Finally, we set $\alpha = 0.1$ throughout the experiments.

\noindent \textbf{Figure \ref{figBackwardNumerical} algorithmic parameters:} For \texttt{Backward-EPE}, we set $\epsilon = 0.15$ and $n = 20$; for the forward approach, we sample $4$ trajectories of length $\frac{1}{1-\alpha} = 10$ for each state. Thus, \texttt{Backward-EPE} requires $20 S$ samples in the worst case, while the forward approach requires $40 S$ samples in any case. This is why all datapoints in the middle plot of Figure \ref{figBackwardNumerical} lie at or below $\frac{ 20S }{40 S} = 0.5$. We note these algorithmic parameters are not those required analytically (which are too loose in practice), but we find in practice that they yield similar $l_{\infty}$ error.

\noindent \textbf{Figure \ref{figBidirectionalNumerical} experiment parameters:} We simulate the algorithms for $S \in \{100,200,400,800,1600,\\ 3200\}$, generating $Q$ and $c$ as above with $p=10$ (i.e.\ Case 1 from Figure \ref{figBackwardNumerical}). As in Figure \ref{figBackwardNumerical}, we set $\alpha = 0.9$ and conduct $100$ trials.

\noindent  \textbf{Figure \ref{figBidirectionalNumerical} algorithmic parameters:} For the forward approach, we sample $0.05 S$ trajectories of length $\frac{1.5}{1-\alpha} = 15$ for each state; note the number of trajectories and their lengths are both greater than in Figure \ref{figBackwardNumerical}, which we find is necessary to maintain constant relative error. For \texttt{Backward- EPE}, we set $\epsilon = 10/S$ and $n = S$; again, these parameters are modified from Figure \ref{figBackwardNumerical} to maintain constant relative error. For \texttt{Bidirectional-EPE}, we set $n_B = n = S$ and $n_F = 1.5 \sqrt{S}$. Instead of fixing $\epsilon$ (the termination criteria for the \texttt{Backward-EPE} subroutine) \textit{a priori}, we choose it dynamically; in particular, we terminate the subroutine at the first iteration $k$ for which $|U_k| n_B \geq S n_F$. Note this trades off backward and forward sample complexity, i.e.\ we terminate the backward stage when its sample complexity exceeds the complexity of the forthcoming forward stage.

\section{Analysis of forward approach} \label{appStandardApproach}

We recall from Section \ref{secIntro} that the forward approach proceeds as follows. Fix $T \in \N$ and, for each $s \in \mathcal{S}$, sample $m$ length-$T$ trajectories $\{ \{ W_t^{s,i} \}_{t=0}^{T-1} \}_{i=1}^m$ beginning at $s$, and estimate $v(s)$ as
\begin{equation}
\hat{v}_E(s) = \frac{1}{m} \sum_{i=1}^m (1-\alpha) \sum_{t=0}^{T-1} \alpha^t c ( W_t^{s,i} ) .
\end{equation}
(We use the subscript $E$ to distinguish the estimate of this forward approach from the estimates of our algorithms.) To analyze this scheme, we follow the analysis of Proposition 5.4 in \cite{haskell2016empirical}. By the argument leading to \eqref{eqChoiceOfT} in Appendix \ref{proofThmAccuracy} (but with a different constant), we have
\begin{equation}\label{eqChoiceOfTstd}
T \geq \frac{ \log ( 2 \|c\|_{\infty} / \epsilon ) }{1-\alpha}  \quad \Rightarrow \quad | \hat{v}_E(s) - v(s) | \leq | \hat{v}_E(s) - \E \hat{v}_E(s) | + \epsilon ,
\end{equation}
so consequently, for $T$ as in \eqref{eqChoiceOfTstd},
\begin{equation}
\P ( | \hat{v}_E(s) - v(s) | \geq 2 \epsilon ) \leq \P \left( | \hat{v}_E(s) - \E \hat{v}_E(s) | \geq \epsilon \right) .
\end{equation}
Towards further bounding the right side, we write (as in \eqref{eqMaxTandGeoSeries})
\begin{align}\label{eqHaskellMistake_start}
| \hat{v}_E(s) - v(s) | 
& \leq \max_{t \in [T-1] } \left| \frac{1}{m} \sum_{i=1}^m ( c ( W_t^{s,i} ) - \E c ( W_t^{s,i} ) ) \right| \frac{\alpha}{1-\alpha} .
\end{align}
Combining the previous two inequalities, and using the union bound,
\begin{equation} \label{eqHaskellMistake_end}
\P ( | \hat{v}_E(s) - v(s) | \geq 2 \epsilon )   \leq \sum_{t=1}^{T-1} \P \left( \left| \frac{1}{m} \sum_{i=1}^m ( c ( W_t^{s,i} ) - \E c ( W_t^{s,i} ) ) \right| \geq \frac{\epsilon(1-\alpha)}{\alpha} \right) .
\end{equation}
We then apply the Chernoff bound \eqref{eqChernoffAbs} to bound the $t$-th summand by
\begin{align}
\P \left( \left| \sum_{i=1}^m \left( \frac{c ( W_t^{s,i} )}{\|c\|_{\infty}} - \frac{\E c ( W_t^{s,i} ) )}{\|c\|_{\infty}} \right) \right| \geq \frac{m \epsilon(1-\alpha)}{\|c\|_{\infty} \alpha} \right) \leq 2 \exp \left( - \frac{2 m \epsilon^2 (1-\alpha)^2}{ \|c\|_{\infty}^2 \alpha^2 } \right) .
\end{align}
Note this holds uniformly in $t$; also, we can take a union bound over $s \in \mathcal{S}$ to obtain
\begin{equation}
\P ( \| \hat{v}_E - v \|_{\infty} \geq 2 \epsilon ) \leq 2 S T \left( - \frac{2 m \epsilon^2 (1-\alpha)^2}{ \|c\|_{\infty}^2 \alpha^2 } \right) \leq \delta ,
\end{equation}
where the final inequality holds assuming we choose
\begin{equation}
m \geq \frac{ \|c\|_{\infty}^2 \alpha^2  }{ 2 \epsilon^2 (1-\alpha)^2 } \log \left( \frac{2 S T}{\delta} \right) . 
\end{equation}
Note here that $m$ is the number of length-$T$ trajectories sampled from each state. Thus, the overall sample complexity is at least $S T m$, which we can lower bound as
\begin{equation}
S T m \geq \frac{S \|c\|_{\infty}^2 \alpha^2 \log ( 2 \|c\|_{\infty} / \epsilon ) }{ 2 \epsilon^2 (1-\alpha)^3 } \log \left( \frac{2 S}{\delta}  \frac{ \log ( 2 \|c\|_{\infty} / \epsilon ) }{1-\alpha} \right) .
\end{equation}

\section{Chernoff bounds} \label{appChernoff}

The following is a standard concentration of measure result used throughout our analysis.
\begin{theorem}
Let $\{ R_i \}_{i=1}^m$ be independent $[0,1]$-valued random variables and $R = \sum_{i=1}^m R_i$. Then
\begin{align}
\P ( | R - \E R | > \eta ) & \leq 2 \exp ( - 2 \eta^2 / m )\ \forall\ \eta > 0,  \label{eqChernoffAbs} \\
\P ( | R - \E R | > \eta \E R ) & \leq 2 \exp ( - \eta^2 \E R / 3 )\ \forall\ \eta \in (0,1), \label{eqChernoffRel} \\
\P ( R > \eta ) & \leq 2^{-\eta}\ \forall\ \eta > 6 \E R. \label{eqChernoffLargeEta} 
\end{align}
\end{theorem}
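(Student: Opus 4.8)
The plan is to derive all three inequalities by the exponential-moment (Chernoff) method; each is a classical concentration bound and could equally well be cited from a standard reference. First I would record two moment generating function estimates for a single $[0,1]$-valued random variable $X$ with mean $\mu$: Hoeffding's lemma, $\E[e^{s(X-\mu)}] \le e^{s^2/8}$ for all $s \in \R$; and the convexity bound $e^{sx} \le 1 + (e^s-1)x$ on $x \in [0,1]$, which upon taking expectations and using $1+y \le e^y$ yields $\E[e^{sX}] \le e^{(e^s-1)\mu}$ (and, with $-s$ in place of $s$, $\E[e^{-sX}] \le e^{(e^{-s}-1)\mu}$). Since the $R_i$ are independent, these tensorize to $\E[e^{s(R-\E R)}] \le e^{ms^2/8}$ and $\E[e^{\pm sR}] \le e^{(e^{\pm s}-1)\E R}$. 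For \eqref{eqChernoffAbs}, Markov's inequality applied to $e^{s(R-\E R)}$ then gives $\P(R-\E R > \eta) \le e^{-s\eta + ms^2/8}$ for every $s>0$; the choice $s = 4\eta/m$ yields $e^{-2\eta^2/m}$, and running the same argument for $-R$ and taking a union bound over the two tails produces the factor $2$.

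For \eqref{eqChernoffRel} I would first dispose of the degenerate case $\E R = 0$ (then $R = 0$ a.s.\ and the claim is trivial) and otherwise assume $\E R > 0$. For the upper tail, Markov applied to $e^{sR}$ with $s = \ln(1+\eta) > 0$ gives $\P(R > (1+\eta)\E R) \le \exp(-\E R[(1+\eta)\ln(1+\eta)-\eta])$, after which I invoke the scalar inequality $(1+\eta)\ln(1+\eta) - \eta \ge \eta^2/3$ on $\eta \in (0,1)$, a short calculus exercise. For the lower tail, Markov applied to $e^{-sR}$ with $s = -\ln(1-\eta) > 0$ gives $\P(R < (1-\eta)\E R) \le \exp(-\E R[(1-\eta)\ln(1-\eta)+\eta]) \le \exp(-\E R\,\eta^2/2)$, using the companion inequality $(1-\eta)\ln(1-\eta)+\eta \ge \eta^2/2$. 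Since $\eta^2/2 \ge \eta^2/3$, a union bound over the two tails gives $2\exp(-\eta^2 \E R/3)$.

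Finally, for \eqref{eqChernoffLargeEta}, Markov combined with $\E[e^{sR}] \le e^{(e^s-1)\E R}$ gives $\P(R > \eta) \le e^{-s\eta + (e^s-1)\E R}$ for $s > 0$; taking the optimizing value $s = \ln(\eta/\E R)$ (positive since $\eta > 6\E R \ge \E R$) yields $\P(R > \eta) \le (e\E R/\eta)^{\eta} e^{-\E R} \le (e\E R/\eta)^{\eta}$, and because $\eta > 6\E R$ forces $e\E R/\eta < e/6 < 1/2$, this is at most $2^{-\eta}$. I expect the only mildly delicate points to be verifying the two scalar inequalities used for \eqref{eqChernoffRel} and handling the mean-zero degeneracy there; there is no conceptual obstacle, since all three are textbook bounds.
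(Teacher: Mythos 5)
Your proposal is correct: all three bounds follow from the exponential-moment method exactly as you describe, the two scalar inequalities $(1+\eta)\ln(1+\eta)-\eta \ge \eta^2/3$ and $(1-\eta)\ln(1-\eta)+\eta \ge \eta^2/2$ do hold on $(0,1)$, and the condition $\eta > 6\E R > 2e\,\E R$ makes the last bound go through. The paper does not prove this theorem at all but simply cites Theorem 1.1 of \cite{dubhashi2009concentration}; your derivation is the standard textbook argument behind that reference, so there is nothing to reconcile.
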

\begin{proof}
See e.g.\ Theorem 1.1 in \cite{dubhashi2009concentration}.
\end{proof}

\end{document}